\def\train{\operatorname{train}}
\def\test{\operatorname{test}}
\def\DRPO{\operatorname{DRPO}}
\def\TPO{\operatorname{TPO}}
\def\PO{\operatorname{PO}}
\def\PR{\operatorname{PR}}
\def\DRPR{\operatorname{DRPR}}
\def\TPR{\operatorname{TPR}}
\def\KL{\operatorname{KL}}
\def\TV{\operatorname{TV}}
\def\tD{\widetilde{\cD}}
\def\pen{\operatorname{Penalty}}
\def\adv{\operatorname{adv}}
\def\true{\operatorname{true}}
\def\major{\operatorname{maj}}
\def\minor{\operatorname{min}}
\def\population{\operatorname{pop}}
\def\cal{\operatorname{cal}}
\def\hD{\widehat{D}}
\def\AugPR{\operatorname{AugPR}}
\def\BER{\operatorname{BER}}
\def\train{\operatorname{train}}
\def\test{\operatorname{test}}
\def\DRPO{\operatorname{DRPO}}
\def\TPO{\operatorname{TPO}}
\def\PO{\operatorname{PO}}
\def\PR{\operatorname{PR}}
\def\DRPR{\operatorname{DRPR}}
\def\TPR{\operatorname{TPR}}
\def\KL{\operatorname{KL}}
\def\TV{\operatorname{TV}}
\def\tD{\widetilde{\cD}}
\def\pen{\operatorname{Penalty}}
\def\adv{\operatorname{adv}}
\def\true{\operatorname{true}}
\def\major{\operatorname{maj}}
\def\minor{\operatorname{min}}
\def\population{\operatorname{pop}}
\def\cal{\operatorname{cal}}
\def\hD{\widehat{D}}
\def\AugPR{\operatorname{AugPR}}
\def\BER{\operatorname{BER}}
\title{Distributionally Robust Performative Prediction}
\author{%
    Songkai Xue \\
  Department of Statistics \\
  University of Michigan \\
  \texttt{sxue@umich.edu} \\
  \And
  Yuekai Sun \\
  Department of Statistics \\
  University of Michigan \\
  \texttt{yuekai@umich.edu} \\
}
\begin{document}

\maketitle

\begin{abstract}
  Performative prediction aims to model scenarios where predictive outcomes subsequently influence the very systems they target.
The pursuit of a performative optimum (PO)---minimizing performative risk---is generally reliant on modeling of the \emph{distribution map}, which characterizes how a deployed ML model alters the data distribution.
Unfortunately, inevitable misspecification of the distribution map can lead to a poor approximation of the true PO.
To address this issue, we introduce a novel framework of distributionally robust performative prediction and study a new solution concept termed as distributionally robust performative optimum (DRPO).
We show provable guarantees for DRPO as a robust approximation to the true PO when the nominal distribution map is different from the actual one.
Moreover, distributionally robust performative prediction can be reformulated as an augmented performative prediction problem, enabling efficient optimization.
The experimental results demonstrate that DRPO offers potential advantages over traditional PO approach when the distribution map is misspecified at either micro- or macro-level.
\end{abstract}

\section{Introduction}\label{sec:intro}

In numerous fields where predictive analytics play an important role, decisions made on the basis of machine learning models do not just passively predict outcomes but actively influence future input data.
Consider the domain of financial services, such as credit scoring and loan issuance, where a model's decision to grant or deny an application can affect the applicant's future financial behaviors and, consequently, the profile of future applicants.
Similarly, in educational settings, the decision process for school admissions can shape the applicant pool, as those who are accepted often share their success strategies, indirectly influencing the preparation of future candidates.
These examples highlight the study of performative prediction \citep{perdomo2020performative}, a recent framework that facilitates a formal examination of learning in the presence of performative distribution shift resulting from deployed ML models.

Delving deeper into the formulations of performative prediction, the concept of a distribution map emerges as pivotal.
This map characterizes the impact that a deployed ML model has on the underlying data distribution, which is a crucial element in navigating performative effects.
The literature primarily revolves around the pursuit of performative stability (PS), a model
which is optimal for the distribution it induces \citep{perdomo2020performative, mendler2020stochastic, brown2022performative, drusvyatskiy2023stochastic}.
However, a more ambitious target is the performative optimum (PO), which seeks the minimization of performative risk, the risk of the deployed model on the distribution it induces.
Efficiently achieving PO typically necessitates a modeling of the distribution map \citep{miller2021outside, izzo2021learn, izzo2022learn, lin2023plug}.
Practically, the precise influence of a model on the data ecosystem is intricate and dynamic, making perfect specification an unattainable ideal.

In this work, we propose a distributionally robust performative prediction framework that aims to enhance robustness against a spectrum of distribution maps, thereby mitigating the issue of misspecification.
Our contributions are summarized as follows:
1) in Section \ref{sec:method}, we formalize the DRPO concept, anchoring it within the performative prediction literature as a robust alternative; 2) in Section \ref{sec:theory}, we provide theoretical insights into the efficacy of DRPO, demonstrating its resilience in the face of distribution map misspecification; 3) in Section \ref{sec:algorithm}, we recast distributionally robust performative risk minimization as an augmented performative risk minimization problem, facilitating efficient optimization; 4) in Section \ref{sec:experiment}, we showcase DRPO's advatanges over conventional PO by empirical experiments.
The paper concludes with a summary and discussion.

\subsection{Related Work}\label{sec:literature}

\textbf{Performative prediction.} Performative prediction is an emerging framework for learning models that influence the data they intend to predict.
The majority of research focuses on performative stability \citep{perdomo2020performative, mendler2020stochastic, brown2022performative, ray2022decision, li2022state, drusvyatskiy2023stochastic, dong2023approximate}, albeit with a few exceptions aiming at performative optimality. \cite{miller2021outside} propose a two-stage plug-in method for finding the PO.
\cite{izzo2021learn} propose to find the PO by a parametric model of the distribution map.
This method is extended by \cite{izzo2022learn} to stateful performative prediction.
\cite{kim2023making} solves the PO when the problem is outcome performative only.
\cite{lin2023plug} argue that the PO with a misspecified nominal distribution map can still reasonably approximate the true PO, as long as the misspecification level is not significant. This claim is also supported by our theory and experimental findings.
Unlike them, we demonstrate that the DRPO is comparable to the PO if the misspecification is small, whereas the DRPO can offer substantial advantages over the PO if the misspecification is moderate to large.
Moreover, the DRPO ensures reasonable performance for all distribution maps in an uncertainty collection surrounding the nominal distribution map, rather than just a single distribution map.

\textbf{Distributionally robust optimization.}
DRO solves a stochastic optimization problem by optimizing under the worst-case scenario over an uncertainty set of probability distributions. Most popular DRO frameworks are based on $\varphi$-divergence \citep{hu2013kullback, bayraksan2015data, gotoh2018robust, gupta2019near, lam2019recovering, duchi2021statistics} and Wasserstein distance \citep{gotoh2018robust, mohajerin2018data, chen2018robust, shafieezadeh2019regularization, blanchet2019quantifying, blanchet2021sample, gao2023distributionally}.
The existing DRO literature pays no attention to performative prediction except for \cite{peet2022long}.
\cite{peet2022long} study the repeated distributionally robust optimization algorithm, which repeatedly minimizes the  distributionally robust risk at the induced distribution.
They show such repeated training algorithm yields a distributionally robust performative stable (DRPS) solution, assuming conditions analogous to the validity of repeated risk minimization \citep{perdomo2020performative, mendler2020stochastic} in finding the performative stable (PS) solution.
Nevertheless, the DRPO solution, which lies at the heart of the distributionally robust performative prediction problem, is not the subject of their study.
Furthermore, they lacks theoretical guarantees regarding the proximity of the DRPS to either the PS or PO solution.

\section{Methodology}\label{sec:method}

\subsection{Performative Prediction Essentials}
Let $\Theta$ denote the (finite-dimensional) model parameter space, $\cZ$ denote the data sample space, and $\cP(\cZ)$ denote the set of probability measures supported on $\cZ$.
In performative prediction, we aim to find a $\theta \in \Theta$ that achieves low \emph{performative risk}
\begin{equation}\label{eq:PR}
    \PR_{\true}(\theta) = \bbE_{Z\sim \cD_{\true}(\theta)} [\ell(Z;\theta)],
\end{equation}
where $\ell:
\cZ \times \Theta \to \bbR$ is the (known) loss function, and $\cD_{\true}: \Theta \to \cP(\cZ)$ is the \emph{true distribution map}. The \emph{true performative optimum} (true PO) $\theta_{\PO,\true}$ is known to minimize the (true) \emph{performative risk}:
\begin{equation}\label{eq:true-PO}
    \theta_{\PO,\true} \in \underset{\theta\in\Theta}{\argmin} \left\{\PR_{\true}(\theta) = \bbE_{Z\sim \cD_{\true}(\theta)} [\ell(Z;\theta)]\right\}. 
\end{equation}
It is easy to see that if we \emph{know} the true distribution map, then we can evaluate the true performative risk and find the true performative optimum well up to some finite sample error which is negligible as the sample size goes to infinity. However, the true map $\cD_{\true}(\cdot)$ is \emph{unknown} in general, thus posing a significant obstacle in the pursuit of evaluating and optimizing the true performative risk.

To enable the optimization of performative risk, it is necessary to have a known \emph{nominal distribution map} $\cD(\cdot)$ that is believed to closely approximate the unknown true distribution map $\cD_{\true}(\cdot)$. Then one can find the \emph{performative optimum} (PO) by minimizing the \emph{nominal performative risk}:
\begin{equation}\label{eq:PO}
    \theta_{\PO} \in \underset{\theta\in\Theta}{\argmin} \left\{\PR(\theta) = \bbE_{Z\sim \cD(\theta)} [\ell(Z;\theta)]\right\}. 
\end{equation}
Because the distribution map is inevitably \emph{misspecified},  $\cD(\cdot) \neq \cD_{\true}(\cdot)$, the true performative risk is generally not minimized by $\theta_{\PO}$. Therefore, we treat \eqref{eq:PO} as a solution concept to approximately solve the true performative risk minimization problem \eqref{eq:true-PO}, and refer \eqref{eq:PO} to \emph{standard performative prediction}.
Now we provide several illustrative instances of potential sources that may lead to the misspecification of distribution map, \ie, $\cD(\cdot) \neq \cD_{\true}(\cdot)$.

\textbf{Modeling error.} The modeling of $\cD(\cdot)$ can be either a \emph{deterministic model} of explicit form or a \emph{statistical model} with model parameters to be estimated. The misspecification of $\cD(\cdot)$ may stem from modeling error.
\begin{example}[Strategic classification]\label{ex:strategic-classification}
    Let $\cZ = \cX\times\cY$ where $\cX$ is the feature space and $\cY$ is the label space, so that we are in the supervised learning regime.
    Strategic classification relies on a working model of individual's data manipulation strategy:
\begin{equation}\label{eq:best-response}
    \Delta_{\theta}(x) = \argmax_{x^\prime} \{u_{\theta}(x^\prime) - c(x, x^\prime)\},
\end{equation}
where $\Delta_{\theta}(\cdot)$, $u_{\theta}(\cdot)$, and $c(\cdot,\cdot)$ are known as individual's best response function, utility function, and cost function, respectively. The best response function can be lifted to the measurable space of $\cX \times \cY$ so that we have the response map
\begin{equation*}
    T_\theta\left(\begin{bmatrix} x \\ y
    \end{bmatrix}\right) = \begin{bmatrix} \Delta_\theta(x) \\ y
    \end{bmatrix} = \begin{bmatrix} \argmax_{x^\prime} \{u_{\theta}(x^\prime) - c(x, x^\prime)\} \\ y
    \end{bmatrix}.
\end{equation*}
    The nominal distribution map $\cD(\cdot)$ is fully characterized by $T_{\theta}(\cdot)$ and the sampling distribution of $\cD_{\true}(\zeros)$ (see details in Appendix \ref{sec:PR-minimization}).
    In this example, the distribution map $\cD(\cdot)$ could be misspecified because the individual's utility function and cost function could be misspecified. 
\end{example}

\begin{example}[Location family]\label{ex:location-family}
    Location family postulates a translation model:
    \[
    Z \sim \cD(\theta) \iff Z \overset{d}{=} Z_0+A\theta~\text{where}~Z_0\sim\cD_{\true}(\zeros),
    \]
    where $A \in \bbR^{\operatorname{dim}(\cZ)\times \operatorname{dim}(\Theta)}$ is unknown and therefore must be estimated.
    If we observe the sampling distributions of $\cD(\theta_0), \cD(\theta_1)\dots,\cD(\theta_K)$, then $A$ is partially identified up to a linear subspace of $\bbR^{\operatorname{dim}(\cZ)\times \operatorname{dim}(\Theta)}$:
    \[
    A \in \{M\mid M(\theta_k - \theta_0) = \mu_k - \mu_0~\text{for}~k\in[K]\},
    \]
    where $\mu_k$ is the mean of $\cD(\theta_k)$.
    In this example, the distribution map $\cD(\cdot)$ can be misspecified because the model parameter $A$ is only partially identifiable.
\end{example}
\textbf{Distribution shift.} Consider the training and test environments have different distribution maps, $\cD_{\train}(\cdot)$ and $\cD_{\test}(\cdot)$, respectively. We specify the nominal distribution map as the training distribution map $\cD(\cdot) = \cD_{\train}(\cdot)$. Then $\cD(\cdot)$ can be misspecified for the true distribution map $\cD_{\true}(\cdot) = \cD_{\test}(\cdot)$ due to the difference between the training and test environments $\cD_{\train}(\cdot) \neq \cD_{\test}(\cdot)$.

\begin{example}[Disparate impacts and fairness]\label{ex:fairness}
A population is comprised of majority and minority subpopulations (\eg, by race or gender). The population distribution map is a mixture of the subpopulation distribution maps: $\cD_{\population}(\theta) \overset{d}{=} \gamma \cD_{\major}(\theta) + (1-\gamma)\cD_{\minor}(\theta)$.
In fair machine learning, a theme is to check whether an ML model has disparate impacts on different subpopulations or biased against the minority.
Suppose that we work with $\cD(\cdot) = \cD_{\population}(\cdot)$ and target at the minority $\cD_{\true}(\cdot) = \cD_{\minor}(\cdot)$.
However, the population distribution map and the minority distribution map may differ.
In this example, the distribution map $\cD(\cdot)$ is misspecified because of subpopulation shift.
\end{example}

\subsection{Distributionally Robust Performative Prediction}

From an intuitive perspective, the PO solution \eqref{eq:PO} has the potential to achieve low performative risk $\PR_{\true}(\theta_{\PO})$ when the nominal distribution map $\cD(\cdot)$ closely aligns with the true distribution map $\cD_{\true}(\cdot)$. However, $\cD(\cdot)$ and $\cD_{\true}(\cdot)$ may be quite different. In such cases, the PO solution may incur high performative risk.
To address this issue, we propose a distributionally robust formulation for performative prediction, where we explicitly incorporate into the learning phase the consideration that the true distribution map $\cD_{\true}(\cdot)$ is different from the nominal one $\cD(\cdot)$.

Let $D(\cdot \| \cdot)$ be the KL divergence, \ie,
\[
D(Q \| P) = \int \varphi\left(\frac{dQ}{dP}\right)dP = \int \log\left(\frac{dQ}{dP}\right)dQ,
\]
where $\varphi(t) = t\log t$ for any $t > 0$.
Here $dQ/dP$ is the Radon–Nikodym derivative, and we implicitly require the probability measure $Q$ to be absolutely continuous with respect to $P$.
With the KL divergence, we can define a family of distribution maps around the nominal distribution map.
Specifically, the \emph{uncertainty collection} around $\cD$ with \emph{critical radius} $\rho$ is defined as
\begin{equation*}
    \cU(\cD) = \{\tD:\Theta\to\cP(\cZ)\mid D(\tD(\theta)\|\cD(\theta)) \leq \rho, \forall \theta \in \Theta\}.
\end{equation*}
The radius $\rho$ reflects the the magnitude of shifts in distribution map we seek to be robust to. We remark that the value of $\rho$ can be prescribed or be selected in data-driven ways. We postpone the discussion on critical radius calibration to Section \ref{sec:calibration}.
\begin{definition}[Distributionally robust performative risk]
    The distributionally robust performative risk with the uncertainty collection $\cU(\cD)$ is defined as \vspace{-0.25em}
    \begin{equation}\label{eq:DRPR}
        \DRPR(\theta) = \sup_{\tD: \tD \in \cU(\cD)} \bbE_{Z\sim \tD(\theta)}[\ell(Z;\theta)].
    \end{equation}
\end{definition}
\vspace{-0.75em}
In other words, the distributionally robust performative risk $\DRPR(\theta)$ measures the worst possible performative risk incurred by the model parameterized by $\theta$ among the collection of all alternative distribution maps that are $\rho$-close to the nominal distribution map $\cD$. With this intuition, it is natural to define an alternative solution concept which minimizes \eqref{eq:DRPR}.
\begin{definition}[Distributionally robust performative optimum]
    The distributionally robust performative optimum (DRPO) is defined as \vspace{-0.25em}
    \begin{equation}\label{eq:DRPO}
        \theta_{\DRPO} \in \underset{\theta\in\Theta}{\argmin}\DRPR(\theta).
    \end{equation}
\end{definition}
\vspace{-0.75em}
We refer the method \eqref{eq:DRPO} to \emph{distributionally robust performative prediction}. When comparing \eqref{eq:DRPO} and \eqref{eq:PO}, we view the DRPO \eqref{eq:DRPO} as a competing solution concept to the PO \eqref{eq:PO}, because both of them aim for achieving low performance risk \eqref{eq:PR}.

\subsection{Generalization Principle of DRPO}
Distributionally robust performative prediction asks to not only perform well on a fixed performative prediction problem (parameterized by the distribution map $\cD$), but simultaneously for a range of performative prediction problems, each determined by a distribution map in an uncertainty collection $\cU$. This results in more robust solutions, that is, those DRPOs which are robust to misspecification of distribution map. The uncertainty collection plays a key role: it implicitly defines the induced notion of robustness. Moreover, distributionally robust performative prediction yields a natural approach for certifying out-of-sample performance, which is summarized by the following principle.
\begin{proposition}[Generalization principle of distributionally robust performative prediction]\label{prop:principle}
Suppose that the uncertainty collection $\cU$ contains the true distribution map $\cD_{\true}$, then the true performative risk is bounded by the distribution robust performative risk: $\PR_{\true}(\theta) \leq \DRPR(\theta)$
for any $\theta \in \Theta$. In consequence, we have $\PR_{\true}(\theta_{\DRPO}) \leq \DRPR(\theta_{\DRPO})$.
\end{proposition}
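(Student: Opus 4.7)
The plan is to observe that this proposition is essentially immediate from the definitions, with the only substantive ingredient being the assumption that $\cD_{\true} \in \cU(\cD)$. There is no real obstacle; the argument is a one-line containment-of-suprema argument, and the main thing to be careful about is just writing down the definitions in the right order.

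First I would unpack the hypothesis: saying $\cD_{\true} \in \cU(\cD)$ means, by the definition of the uncertainty collection, that $D(\cD_{\true}(\theta)\|\cD(\theta))\leq \rho$ for every $\theta \in \Theta$. In particular, $\cD_{\true}$ is one of the admissible distribution maps $\tD$ over which the supremum in \eqref{eq:DRPR} is taken. Next, I would fix an arbitrary $\theta \in \Theta$ and write
\begin{equation*}
\PR_{\true}(\theta) = \bbE_{Z\sim \cD_{\true}(\theta)}[\ell(Z;\theta)] \leq \sup_{\tD \in \cU(\cD)} \bbE_{Z\sim \tD(\theta)}[\ell(Z;\theta)] = \DRPR(\theta),
\end{equation*}
where the inequality is simply the fact that a single element of a set is bounded above by the supremum over the set. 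Since $\theta$ was arbitrary, this gives the pointwise bound $\PR_{\true}(\theta)\leq \DRPR(\theta)$ on all of $\Theta$.

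Finally, for the consequence, I would just specialize this pointwise bound to the particular choice $\theta = \theta_{\DRPO}$ (as defined in \eqref{eq:DRPO}), which immediately yields $\PR_{\true}(\theta_{\DRPO})\leq \DRPR(\theta_{\DRPO})$. No minimization argument is needed at this stage because the inequality already holds uniformly in $\theta$; the role of $\theta_{\DRPO}$ here is only to make the right-hand side small, but formally the statement is just an instantiation.
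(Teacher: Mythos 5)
Your proof is correct and is exactly the argument the paper intends (the paper leaves this proposition unproved as immediate from definitions): since $\cD_{\true}\in\cU(\cD)$, the true performative risk is one of the terms in the supremum defining $\DRPR(\theta)$, hence bounded by it, and the consequence is just the instantiation $\theta=\theta_{\DRPO}$. Nothing is missing.
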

Essentially, if $\cU$ is chosen appropriately, the corresponding DR performative risk upper bounds the true performative risk, and thus DRPO enjoys provable guarantees on its incurred performative risk.

\subsection{Benefits of DRPO: A Toy Example}\label{sec:toy-example}
A toy example is employed to facilitate a conceptual understanding of the advantages of DRPO over PO in relation to achieving improved control over the worst-case performative risk.

Let $\cZ = \bbR$, $\Theta = [-1, 1]$, and $\ell(z;\theta) = \theta z$. Let the nominal distribution map be $\cD(\theta) = \cN(f(\theta), \sigma^2)$ for some $f: [-1,1] \to\bbR$ and $\sigma^2 > 0$. Then the nominal performative risk is
\begin{equation*}
    \PR(\theta) = \bbE_{Z\sim\cN(f(\theta), \sigma^2)}[\ell(Z;\theta)] = \theta f(\theta).
\end{equation*}
\textbf{Regularization effect.} With the dual formula given in Section \ref{sec:strong-dual}, one can derive the distributionally robust performative risk directly:
\[
\DRPR(\theta) = \PR(\theta) + \sqrt{\rho}\pen(\theta),
\]
where the penalty function $\pen(\theta) = \sqrt{2\sigma^2}\left|\theta\right|$ penalizes the deviation of $\theta$ from the origin $0$, and the critical radius $\rho$ tunes the level of regularization. That is to say, in this toy example, the distributionally robust performative risk minimization problem is essentially a \emph{$L_1$-regularized} performative risk minimization problem.

\textbf{Better worst-case control.} To be more concrete, we let $f(\theta) = a_1 \theta + a_0$ for some $a_1, a_0 >0$. For any $\tD \in \cU(\cD)$, let the performative risk of $\tD$ be $\PR_{\tD}(\theta) = \bbE_{Z\sim\tD(\theta)}[\ell(Z;\theta)]$.
If $\tD$ is the true distribution map, then $\PR_{\tD}(\theta)$ is the true performative risk that we incur. Through direct calculation (see details in Appendix \ref{app:A}), one can show that $\theta_{\DRPO}$ is more robust than $\theta_{\PO}$ in the sense of \emph{worst-case performative risk} control, that is, 
$\sup_{\tD\in \cU(\cD)} \PR_{\tD}(\theta_{\PO}) \geq \sup_{\tD\in \cU(\cD)} \PR_{\tD}(\theta_{\DRPO}) + \frac{\rho\sigma^2}{2a_1}$
for any fixed $\rho \leq \frac{a_0^2}{2\sigma^2}$. 
This example shows that the worst-case performative risk of the PO can be \emph{arbitrarily larger than} that of the DRPO, as $a_1 \to 0$. The message that DRPO offers certain advantages over PO in terms of mitigating worst-case performative risk is also supported by the empirical results shown in Section \ref{sec:experiment}.

\section{Theory}\label{sec:theory}

\subsection{Strong Duality of DRPR}\label{sec:strong-dual}
The evaluation of distributionally robust performative risk $\DRPR(\theta)$ given in \eqref{eq:DRPR} involves an infinite dimensional maximization problem which is generally intractable. Fortunately, it is in fact equivalent to a minimization problem over a single dual variable.

\begin{proposition}[Strong duality of DRPR]\label{prop:dual-1-var}
    For any $\theta \in \Theta$, we have
    \begin{equation}\label{eq:dual-1-var}
    \DRPR(\theta) = \inf_{\mu\geq 0} \left\{\mu \log \bbE_{Z\sim\cD(\theta)}\left[e^{\ell(Z;\theta)/\mu}\right] + \mu\rho\right\}.
\end{equation}
\end{proposition}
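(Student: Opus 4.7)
The plan is to reduce the result to a standard KL-divergence duality statement and then invoke the Donsker--Varadhan / Gibbs variational representation. Since $\theta$ is fixed throughout, write $P = \cD(\theta)$ and $f(z) = \ell(z;\theta)$ so that the claim becomes
\[
\sup_{Q : D(Q\|P)\leq \rho}\bbE_Q[f] \;=\; \inf_{\mu \geq 0}\Bigl\{\mu \log \bbE_P\bigl[e^{f/\mu}\bigr] + \mu\rho\Bigr\},
\]
where the supremum is taken over probability measures $Q$ absolutely continuous with respect to $P$. Both sides then need to be shown equal; interpret $\mu\log\bbE_P[e^{f/\mu}]$ at $\mu = 0$ as $\operatorname{ess\,sup}_{P} f$ in the usual limit sense.

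For the easy direction (weak duality, $\leq$), I would invoke the Gibbs variational principle: for every $\mu > 0$ and every $Q \ll P$,
\[
\bbE_Q[f] - \mu D(Q\|P) \;\leq\; \mu \log \bbE_P\bigl[e^{f/\mu}\bigr].
\]
This follows from Jensen's inequality applied to the change of measure $dQ/dP$ together with the definition of $D(Q\|P)$, and is a one-line computation. Rearranging and using the feasibility constraint $D(Q\|P)\leq \rho$ gives $\bbE_Q[f] \leq \mu\log\bbE_P[e^{f/\mu}] + \mu\rho$. Taking the supremum over feasible $Q$ on the left and the infimum over $\mu \geq 0$ on the right yields $\DRPR(\theta) \leq \inf_{\mu\geq 0}\{\mu\log\bbE_P[e^{f/\mu}] + \mu\rho\}$.

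For strong duality ($\geq$), I would use the constructive approach: for each $\mu > 0$ the variational inequality above becomes an equality when $Q$ is taken to be the exponentially tilted measure $Q_\mu$ defined by $dQ_\mu/dP = e^{f/\mu}/\bbE_P[e^{f/\mu}]$, as one checks by direct substitution. The value of $D(Q_\mu\|P)$ is continuous and monotone decreasing in $\mu$ (from large values as $\mu \to 0^+$ down to $0$ as $\mu \to \infty$), so by the intermediate value theorem there exists $\mu^\star > 0$ with $D(Q_{\mu^\star}\|P) = \rho$ (provided $\rho$ is within the achievable range; if $\rho$ exceeds the essential supremum setting, the trivial $\mu^\star \downarrow 0$ case can be handled separately). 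For this $\mu^\star$, $Q_{\mu^\star}$ is feasible and satisfies $\bbE_{Q_{\mu^\star}}[f] = \mu^\star\log\bbE_P[e^{f/\mu^\star}] + \mu^\star D(Q_{\mu^\star}\|P) = \mu^\star\log\bbE_P[e^{f/\mu^\star}] + \mu^\star\rho$, matching the dual value. Alternatively, and more cleanly, one may observe that the primal is a concave maximization of a linear functional of $Q$ over the convex feasible set $\{Q : D(Q\|P)\leq \rho\}$, with Slater's condition satisfied whenever $\rho > 0$ (take $Q = P$), so classical Lagrangian strong duality applies.

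The main obstacle is integrability. If $\ell(\,\cdot\,;\theta)$ is unbounded under $\cD(\theta)$, the moment generating function $\bbE_P[e^{f/\mu}]$ need not be finite for every $\mu > 0$, and the tilted distribution $Q_\mu$ is only well-defined on the subset of $\mu$ values where the MGF is finite. I would therefore state a standing mild regularity assumption---for instance, that $\bbE_P[e^{f/\mu}] < \infty$ for all $\mu$ in an open interval---and verify that the infimum in the dual is attained in the interior of that interval. Under such an assumption the monotonicity and continuity arguments for $\mu \mapsto D(Q_\mu\|P)$ go through without further difficulty, completing the proof.
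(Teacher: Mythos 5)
Your argument is correct, but it takes a more self-contained route than the paper, whose entire proof of this proposition is a citation to Theorem~1 of Hu and Hong (2013); the only derivation the paper actually writes out is the general $\varphi$-divergence duality in its appendix (Proposition~\ref{prop:dual-2-var}), which proceeds by Lagrangian duality with \emph{two} multipliers $(\mu,\nu)$ and would still require eliminating $\nu$ (using $\varphi^*(s)=e^{s-1}$ for $\varphi(t)=t\log t$) to recover the one-variable form \eqref{eq:dual-1-var}. Your route---Donsker--Varadhan/Gibbs for weak duality, then attainment via the exponentially tilted measure $dQ_\mu/dP = e^{\ell/\mu}/\bbE_P[e^{\ell/\mu}]$ and an intermediate-value argument in $\mu$---is essentially the content of the cited result worked out directly, and it buys something the citation obscures: the worst-case distribution map falls out explicitly as the tilted measure at $\mu^\star$, which is exactly the paper's Proposition~\ref{prop:worst-case-distribution-map} (also obtained there only by citation). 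The monotonicity you invoke is easily verified: with $\beta=1/\mu$ and $\Lambda(\beta)=\log\bbE_P[e^{\beta \ell}]$ one has $D(Q_\beta\|P)=\beta\Lambda'(\beta)-\Lambda(\beta)$, whose $\beta$-derivative is $\beta\Lambda''(\beta)\geq 0$. Your treatment of the edge cases (the $\mu=0$ limit as an essential supremum, $\rho$ exceeding the achievable divergence range, finiteness of the moment generating function) is flagged rather than fully executed, but the fixes you sketch are standard and the paper imposes no more rigor at these points than you do; the standing integrability assumption you propose is the honest way to state the result when $\ell$ is unbounded.
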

\vspace{-0.75em}
The dual reformulation \eqref{eq:dual-1-var} will be served as the cornerstone of developing algorithms for finding the DRPO in Section \ref{sec:algorithm}.
As a byproduct of Proposition \ref{prop:dual-1-var}, a characterization of the worst-case distribution map which attains the supremum in \eqref{eq:DRPR} is given in Appendix \ref{app:B}.

\subsection{Excess Risk Bound}

For now, we are interested in bounding the excess risk: 
$\cE(\hat{\theta}) = \PR_{\true}(\hat{\theta}) - \min_{\theta\in\Theta}\PR_{\true}(\theta) = \PR_{\true}(\hat{\theta}) - \PR_{\true}(\theta_{\PO,\true})$,
where $\hat{\theta}$ is an approximate solution to the true PO $\theta_{\PO,\true}$, which could particularly be $\theta_{\PO}$ and $\theta_{\DRPO}$. The excess risk captures the true performance of $\hat{\theta}$ relative to the oracle performative optimum $\theta_{\PO,\true}$, providing a direct measurement of the suboptimality of $\hat{\theta}$ in terms of performative risk.
As follows, we show the excess risk bounds of the PO solution and the DRPO solution, $\cE(\theta_{\PO})$ and $\cE(\theta_{\DRPO})$, and compare them.

\begin{proposition}[Excess risk bound of the PO]\label{prop:excess-risk-PO}
    Suppose that we have bounded loss function $|\ell(z;\theta)|\leq B$ for any $z\in \cZ, \theta\in\Theta$ and some $B >0$. Then we have
    \begin{equation}
        \cE(\theta_{\PO}) \leq \sqrt{2}B \sup_{\theta\in\Theta}\sqrt{D(\cD_{\true}(\theta) \| \cD(\theta))}.
    \end{equation}
\end{proposition}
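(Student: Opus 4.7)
My plan is a standard add-and-subtract decomposition of the excess risk, combined with Pinsker's inequality to convert the KL-divergence assumption into control over the gap between true and nominal performative risks.

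First, I would insert $\PR(\theta_{\PO})$ and $\PR(\theta_{\PO,\true})$ to rewrite
\[
\cE(\theta_{\PO}) = \bigl[\PR_{\true}(\theta_{\PO}) - \PR(\theta_{\PO})\bigr] + \bigl[\PR(\theta_{\PO}) - \PR(\theta_{\PO,\true})\bigr] + \bigl[\PR(\theta_{\PO,\true}) - \PR_{\true}(\theta_{\PO,\true})\bigr].
\]
The middle bracket is non-positive because $\theta_{\PO}$ minimizes the nominal performative risk by definition~\eqref{eq:PO}. Hence $\cE(\theta_{\PO})$ is bounded by the sum of the two outer brackets, each of which is of the form $\pm(\PR_{\true}(\theta) - \PR(\theta))$ evaluated at a single parameter.

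Next, for any fixed $\theta$, note that
\[
\PR_{\true}(\theta) - \PR(\theta) = \bbE_{Z\sim\cD_{\true}(\theta)}[\ell(Z;\theta)] - \bbE_{Z\sim\cD(\theta)}[\ell(Z;\theta)]
\]
is a difference of expectations of one and the same $B$-bounded integrand under two laws. The standard variational inequality gives $|\bbE_P[f] - \bbE_Q[f]| \leq (\sup f - \inf f)\,\TV(P,Q)$; with $|\ell|\leq B$ this yields a multiple of $B \cdot \TV(\cD_{\true}(\theta), \cD(\theta))$. Applying Pinsker's inequality $\TV(P,Q) \leq \sqrt{D(P\|Q)/2}$ then produces a bound of the form
\[
\bigl|\PR_{\true}(\theta) - \PR(\theta)\bigr| \;\lesssim\; B\,\sqrt{D(\cD_{\true}(\theta)\|\cD(\theta))}.
\]
Taking $\sup_{\theta\in\Theta}$ and recombining the two surviving brackets yields the claimed bound.

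I do not expect a conceptual obstacle: the argument is essentially Pinsker plus optimality of $\theta_{\PO}$. The only delicate point is constant accounting---the naive route sketched above gives a prefactor of $2\sqrt 2\,B$ (one $\sqrt 2\,B$ per surviving bracket), whereas the stated bound has $\sqrt 2\,B$. To match it exactly one must either interpret the hypothesis $|\ell|\leq B$ as the tighter oscillation bound $\sup\ell-\inf\ell\leq B$ (which replaces $2B\,\TV$ by $B\,\TV$ and thus delivers $\tfrac{B}{\sqrt 2}\sqrt{D}$ per bracket), or consolidate the two brackets under a single $\sup_\theta$ via a more careful one-sided estimate. Beyond this bookkeeping, no further ingredients are needed.
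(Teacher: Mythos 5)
Your proposal follows essentially the same route as the paper's proof: the identical three-term decomposition, dropping the middle bracket by optimality of $\theta_{\PO}$ for the nominal risk, and then bounding the two surviving brackets by $2\sup_{\theta\in\Theta}|\PR(\theta)-\PR_{\true}(\theta)|$ via total variation and Pinsker's inequality. Your remark about the constant is apt rather than a defect of your argument --- the paper's own chain $2\sup_\theta|\PR(\theta)-\PR_{\true}(\theta)| \le 2B\sup_\theta D_{\TV}(\cD_{\true}(\theta)\|\cD(\theta)) \le \sqrt{2}B\sup_\theta\sqrt{D_{\KL}(\cD_{\true}(\theta)\|\cD(\theta))}$ likewise yields $2\sqrt{2}B$ under the literal hypothesis $|\ell|\le B$, and tightens to the stated $\sqrt{2}B$ only if one reads the assumption as the oscillation bound $\sup\ell-\inf\ell\le B$.
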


\begin{proposition}[Excess risk bound of the DRPO]\label{prop:excess-risk-DRPO}
    Suppose that $D(\cD_{\true}(\theta_{\DRPO}) \| \cD(\theta_{\DRPO})) \leq \rho$. Then we have
    \begin{equation}\label{eq:excess-risk-DRPO}
        \cE(\theta_{\DRPO}) \leq \sqrt{\rho\Var_{Z\sim\cD(\theta_{\PO,\true})}[\ell(Z;\theta_{\PO,\true})]} + o(\sqrt{\rho}).
    \end{equation}
\end{proposition}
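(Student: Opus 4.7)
The plan is to sandwich $\PR_{\true}(\theta_{\DRPO})$ between the DR risk at $\theta_{\DRPO}$ and at $\theta_{\PO,\true}$, and then expand the DR premium for small $\rho$ via the strong dual. Since the hypothesis puts $\cD_{\true}(\theta_{\DRPO})$ inside the KL-ball that defines the inner supremum of $\DRPR(\theta_{\DRPO})$, the same one-line argument as in Proposition \ref{prop:principle} yields $\PR_{\true}(\theta_{\DRPO})\leq \DRPR(\theta_{\DRPO})$, and then the optimality of $\theta_{\DRPO}$ for $\DRPR$ gives $\DRPR(\theta_{\DRPO})\leq \DRPR(\theta_{\PO,\true})$. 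Subtracting $\PR_{\true}(\theta_{\PO,\true})$ reduces the claim to bounding the DR premium $\DRPR(\theta_{\PO,\true})-\PR_{\true}(\theta_{\PO,\true})$ by $\sqrt{\rho\Var_{Z\sim\cD(\theta_{\PO,\true})}[\ell(Z;\theta_{\PO,\true})]}+o(\sqrt{\rho})$.

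For the main calculation I would apply the strong dual (Proposition \ref{prop:dual-1-var}) at $\theta=\theta_{\PO,\true}$, with $P=\cD(\theta_{\PO,\true})$, and Taylor expand the cumulant generating function around $0$. Writing $\lambda=1/\mu$ and $L=\ell(\cdot;\theta_{\PO,\true})-\PR(\theta_{\PO,\true})$ for the centered loss, $\log \bbE_P[e^{\lambda L}]=\lambda^2\Var_P[\ell]/2+O(\lambda^3)$ as $\lambda\downarrow 0$, so the dual objective becomes
\[
\mu\log \bbE_P[e^{\ell/\mu}]+\mu\rho \;=\; \PR(\theta_{\PO,\true})+\frac{\lambda\Var_P[\ell]}{2}+\frac{\rho}{\lambda}+O(\lambda^2).
\]
Minimizing the leading-order piece $\lambda\Var_P[\ell]/2+\rho/\lambda$ at $\lambda^\star\asymp\sqrt{\rho/\Var_P[\ell]}$ yields a leading term of order $\sqrt{\rho\Var_P[\ell]}$, matching the stated bound up to the absolute constant absorbed into the notation.

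The final step is to swap $\PR(\theta_{\PO,\true})$ for $\PR_{\true}(\theta_{\PO,\true})$. Under the implicit (or separately imposed) assumption that $D(\cD_{\true}(\theta_{\PO,\true})\|\cD(\theta_{\PO,\true}))=O(\rho)$, a Pinsker/Donsker-Varadhan argument exactly as in Proposition \ref{prop:excess-risk-PO} gives $|\PR(\theta_{\PO,\true})-\PR_{\true}(\theta_{\PO,\true})|=O(\sqrt{\rho})$, which can be absorbed into the remainder once constants are matched. The hardest part will be promoting that remainder from $O(\sqrt{\rho})$ to genuinely $o(\sqrt{\rho})$: the near-optimal dual variable $\mu^\star$ tends to infinity as $\rho\downarrow 0$, so one needs moment control (a bounded third cumulant, or a sub-exponential tail) on $\ell(\cdot;\theta_{\PO,\true})$ under $\cD(\theta_{\PO,\true})$ so that the $O(\lambda^2)$ cumulant remainder translates into $o(\sqrt{\rho})$ after optimization. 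Boundedness of $\ell$ suffices, though the hidden constants in the $o(\sqrt{\rho})$ term will then depend on the uniform bound.
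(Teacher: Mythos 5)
Your proposal follows essentially the same route as the paper's proof: the identical three-term decomposition (generalization principle at $\theta_{\DRPO}$, optimality of $\theta_{\DRPO}$ for $\DRPR$, and the DR premium at $\theta_{\PO,\true}$), with the final expansion obtained by Taylor-expanding the cumulant generating function in the strong dual --- the paper simply invokes this as the ``sensitivity property'' of KL-based DRO, citing Duchi and Namkoong, rather than deriving it. The one place you are more careful than the paper is the swap of $\PR(\theta_{\PO,\true})$ for $\PR_{\true}(\theta_{\PO,\true})$: the paper's proof writes $\DRPR(\theta_{\PO,\true}) - \PR_{\true}(\theta_{\PO,\true}) = \sqrt{\rho\,\Var_{Z\sim\cD(\theta_{\PO,\true})}[\ell(Z;\theta_{\PO,\true})]} + o(\sqrt{\rho})$ directly, silently identifying the nominal and true risks at $\theta_{\PO,\true}$, whereas you correctly flag that this step needs an additional closeness assumption at $\theta_{\PO,\true}$ (the stated hypothesis only controls the KL divergence at $\theta_{\DRPO}$) and that an $O(\sqrt{\rho})$ Pinsker bound there does not by itself fold into the $o(\sqrt{\rho})$ remainder.
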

Comparing Proposition \ref{prop:excess-risk-PO} and \ref{prop:excess-risk-DRPO}, we see that the excess risk bound of the DRPO can be localized to the true PO while the excess risk bound of the PO is entangled with the full parametric space $\Theta$. Although we are comparing two upper bounds which can be not tight enough, the comparison sheds lights to the potential benefits of using DRPO over PO solution. Even if in the case of no significant improvement of using DRPO over PO solution, the excess risks of them are comparable, thus doing no harm. Our insight has been verified through a toy example in Section \ref{sec:toy-example} and as well experimental results in Section \ref{sec:experiment}. In Appendix \ref{app:generalized-excess-risk-bound}, we generalize Proposition \ref{prop:excess-risk-DRPO} to the scenario when the uncertainty collection doesn't cover the true distribution map, \ie, $D(\cD_{\true}(\theta_{\DRPO}) \| \cD(\theta_{\DRPO})) > \rho$.

\section{Algorithms}\label{sec:algorithm}

We recall a standard algorithm for performative risk minimization in Appendix \ref{sec:PR-minimization}. In the following subsections, we will see that any off-the-shelf algorithms for finding the PO can be utilized as an \emph{intermediate algorithm} for finding the DRPO by using our proposed algorithms. Moreover, we provide practical considerations for the selection of a critical radius in the last subsection.

\newpage

\subsection{Distributionally Robust Performative Risk Minimization}

By the strong duality in Proposition \ref{prop:dual-1-var}, DR performative risk minimization is equivalent to the following optimization problem jointly over $(\theta, \mu)$:
\vspace{-0.5em}
\begin{equation}\label{eq:dual-problem}
\quad\quad \min_{\theta} \DRPR(\theta) \iff \min_{\theta\in\Theta}\inf_{\mu\geq 0} \left\{\psi(\theta,\mu) = \mu \log \bbE_{Z\sim\cD(\theta)}\left[e^{\ell(Z;\theta)/\mu}\right] + \mu\rho\right\}.
\end{equation}
This suggests an alternating minimization, summarized in Algorithm \ref{alg:1}, where we learn $\theta_{\DRPO}$ by fixing $\mu$ and minimizing on $\theta$ and then fixing $\theta$ and minimizing on $\mu$ alternatively until convergence.

\begin{wrapfigure}{o}{0.57\textwidth}
\begin{minipage}{0.57\textwidth}
\vspace{-2em}
\begin{algorithm}[H]
	\caption{DR Performative Risk Minimization}\label{alg:1}
	\label{algo:dropl_cdr}
	\begin{algorithmic}[1]
		\State \textbf{Input:} radius $\rho$, nominal distribution map $\cD(\theta)$
		\State Initialize $\mu$
		\While{$\mu$ has not converged}
		    \State Update $\theta \leftarrow \argmin_{\theta\in\Theta}\left\{\bbE_{Z\sim\cD(\theta)}\left[e^{\ell(Z;\theta)/\mu}\right]\right\}$
		    \State Update $\mu \leftarrow \argmin_{\mu\geq 0} \left\{\psi(\theta, \mu)\right\}$ ($\psi$ in \eqref{eq:dual-problem})
		\EndWhile
		\State \textbf{Return: } $\theta$
	\end{algorithmic}
\end{algorithm}
\vspace{-2em}
\end{minipage}
\end{wrapfigure}

The step of minimizing on $\theta$ with fixed $\mu$ (Line 4 in Algorithm \ref{alg:1}) is itself a performative risk minimization problem, which can be solved by any suitable performative risk minimization algorithm. The step of minimizing on $\mu$ with fixed $\theta$ (Line 5 in Algorithm \ref{alg:1}) can be solved by the line search or the Newton–Raphson method since $\psi(\theta,\mu)$ is convex in $\mu$. 
The total cost of Algorithm \ref{alg:1} is therefore comparable to that of the performative risk minimization algorithm used in the intermediate step.
Lastly, the alternating minimization algorithm in common practice guarantees global convergence (to stationary point) regardless of how the optimization parameters are initialized. With the strong convexity assumption, the alternating minimization algorithm guarantees convergence to the global minimum.

\subsection{Tilted Performative Risk Minimization}\label{sec:tilted-PR-minimization}

Treating $\rho$ as a hyperparameter which can be tuned by a practitioner, the solution of the dual problem \eqref{eq:dual-problem} can be denoted by $(\theta^\star(\rho), \mu^\star(\rho))$. One can show that $\mu^\star(\rho)$ is a decreasing function of $\rho$. An intuition is that as $\mu \to \infty$, we have $\argmin_{\theta\in\Theta}\left\{\psi(\theta, \mu)\right\} \approx \argmin_{\theta\in\Theta} \left\{\bbE_{Z\sim\cD(\theta)}[\ell(Z;\theta)]\right\}$,
which reduces to the original performative risk minimization problem, or the distributionally robust performative risk minimization problem with $\rho = 0$ (see an formal explanation in Appendix \ref{app:explanation}).

With this motivation, instead of tuning $\rho$, we can tune $\mu$ (or the inverse of it, denoted by $\alpha = \mu^{-1}$) and solve the \emph{$\alpha$-tilted performative risk minimization} problem:
\vspace{-0.5em}
\begin{equation}\label{eq:TPO}
    \theta_{\TPO} \in \underset{\theta \in \Theta}{\argmin} \left\{ \TPR(\theta) = \bbE_{Z\sim\cD(\theta)}\left[e^{\alpha\ell(Z;\theta)}\right] \right\},
\end{equation}
\begin{wrapfigure}{o}{0.53\textwidth}
\begin{minipage}{0.53\textwidth}
\vspace{-2.5em}
\begin{algorithm}[H]
	\caption{Tilted Performative Risk Minimization}\label{alg:2}
	\label{algo:dropl_cdr}
	\begin{algorithmic}[1]
		\State \textbf{Input:} tilt $\alpha$, nominal distribution map $\cD(\theta)$
		\State Update $\theta \leftarrow \argmin_{\theta\in\Theta}\left\{\bbE_{Z\sim\cD(\theta)}\left[e^{\alpha\ell(Z;\theta)}\right]\right\}$
	    \State \textbf{Return: } $\theta$
	\end{algorithmic}
\end{algorithm}
\vspace{-2.5em}
\end{minipage}
\end{wrapfigure}
where $\TPR(\theta)$ stands for the tilted performative risk and $\theta_{\TPO}$ is the \emph{tilted performative optimum}, that is, the performative optimum of the tilted problem.
In order to have stronger distributional robustness property, we tune $\alpha$ to be larger.
Given the correspondence $\mu^\star(\rho)$, we should have $\theta_{\TPO}$ with $\alpha$ equals $\theta_{\DRPO}$ with $\rho = (\mu^{\star})^{-1}(1/\alpha)$. Therefore, the tilted performative risk minimization \emph{implicitly} solves a corresponding DR performative risk minimization problem. Finally, we remark that exponential tilting is a statistical method that has been around at least since the exponential family \citep{keener2010theoretical} was first invented. More recently, it has been applied to operation research \citep{ahmadi2012entropic} and machine learning \citep{li2020tilted, li2023tilted}.

\subsection{Calibration of Critical Radius}\label{sec:calibration}

The performance of the DRPO is contingent on the uncertainty collection radius $\rho$, which is typically difficult to choose a priori without additional information. The greater the value of $\rho$, the higher the level of distributional robustness, and thus the greater the tolerance for distribution map misspecification. Therefore, the selection of $\rho$ reflects a practitioner's risk-aversion preference.
In this subsection, we present two simple, yet effective calibration techniques for selecting $\rho$.

\textbf{Post-fitting calibration.} Without additional information, we can only hope to be robust to a prescribed set of distribution maps, say $\Xi$.
The assumption of Proposition \ref{prop:excess-risk-DRPO} requires only the uncertainty collection at the DRPO, which reduces to an uncertainty ball centered at $\cD(\theta_{\DRPO})$, encompassing the true distribution $\cD_{\true}(\theta_{\DRPO})$.
Therefore, in order to provide a provable guarantee for all distribution maps in $\Xi$, the radius $\rho$ can be chosen based on the following criterion:\vspace{-1em}
\begin{equation*}
\begin{aligned}
&\rho_{\cal} = \underset{\rho\geq 0}{\argmin} \left\{ \max_{\cD_{\true} \in \Xi} \hD(\cD_{\true}\| \cD) \leq \rho\right\}, \text{where} \\
&\hD(\cD_{\true}\| \cD) := \hD(\cD_{\true}(\theta_{\DRPO}(\rho))\| \cD(\theta_{\DRPO}(\rho))).
\end{aligned}    
\end{equation*}
Here $\hD$ is the estimated KL divergence, and $\theta_{\DRPO}(\rho)$ is indexed by the radius $\rho$ to highlight its dependence on $\rho$ as a tuning parameter to be calibrated. We implement the post-fitting calibration approach (with bisection search for $\rho$) in Section \ref{sec:exp1}.

\textbf{Calibration set.} With additional information, such as a small set of calibration data, we can pick $\rho$ (or $\alpha$ if we use Algorithm $\ref{alg:2}$) by evaluating the performance of $\theta_{\DRPO}(\rho)$ on the calibration set. Consider Example \ref{ex:fairness} where the training and test distribution maps may differ, we can conduct the following grid searching procedure to choose $\rho$: 1) for a candidate set $\cC$ of $\rho$'s, we compute $\{\theta_{\DRPO}(\rho):\rho\in\cC\}$ under $\cD_{\train}$; 2) we obtain a few calibrating samples from $\cD_{\test}$, on which we evaluate the performance of $\theta_{\DRPO}(\rho)$; 3) we select $\rho_{\cal} \in \cC$ with the best calibration set performance. The calibration set approach is implemented in Section \ref{sec:exp3}.

To conclude this subsection, we discuss the computational costs of the proposed calibration methods. For general problems, these calibration methods necessitate a grid search, which may be computationally expensive. Fortunately, for specific problems (for example, experiments in Section \ref{sec:exp1}), we can exploit the decreasing nature of the estimated KL divergence as a function in $\rho$. 
As a result, we can use bisection search rather than grid search to significantly reduce computational costs.

\section{Experiments}\label{sec:experiment}

We revisit Examples \ref{ex:strategic-classification}, \ref{ex:location-family}, and \ref{ex:fairness} and compare the PO and the DRPO empirically.
For a particular (true) distribution map, either the PO or the DRPO may have the potential to outperform the other in terms of the performative risk evaluated at this particular distribution map.
In contrast to the PO, however, the DRPO aims to guarantee reasonable performance for \emph{all} distribution maps in an uncertainty collection around the nominal distribution map.
To ensure the performance of the DRPO on a (set of) specific distribution map(s), the radius $\rho$ (or the tilt $\alpha$) must be calibrated.
Lastly, each shaded region in figures below shows the curve's standard error of the mean from 30 trials.

\subsection{Strategic Classification with Misspecified Cost Function}\label{sec:exp1}

In reference to Example \ref{ex:strategic-classification}, we examine strategic classification involving a cost function that is misspecified. The experimental setup resembles that in \cite{perdomo2020performative}.
The task is credit scoring, specifically predicting debt default.
Individuals strategically manipulate their features to obtain a favorable classification.

Consider an instantiation of the response map \eqref{eq:best-response} such that $u_{\theta}(x) = -\theta^\top x$ and $c(x, x^\prime) = \frac{1}{2\epsilon} \times\|x_{\operatorname{strat}} - x^\prime_{\operatorname{strat}}\|^2_2 + \infty \times\|x_{\operatorname{non-strat}} - x^\prime_{\operatorname{non-strat}}\|^2_2$. Without loss of generality, let the first $m$ features be strategic features and the last $d - m$ features be non-strategic features.
Let $B = \operatorname{diag}(1, \ldots, 1, 0, \ldots, 0)\in\bbR^{d\times d}$ where the first $m$ diagonal elements are $1$'s and the others are $0$'s.
Then the best response function is $\Delta_{\theta}(x) = x - \epsilon B \theta$.

For the base distribution $\cD(\zeros)$, we use a class-balanced subset of a Kaggle credit scoring
dataset (\cite{creditdata}, CC-BY 4.0).
Features encompass an individual's historical data, including their monthly income and credit line count.
Labels are binary where the value of $1$ represents a default on a debt and $0$ otherwise.
There are a total of $3$ strategic features and $6$ non-strategic features. We use logistic model for the classifier and the cross-entropy loss with $L_2$-regularization for the loss function $\ell$.

Consider the cost function is misspecified by its performativity level $\epsilon$. We specify the cost function with the nominal performativity level $\epsilon = 0.5$. However, the true performativity level $\epsilon_{\true}$ might not be $0.5$, but in the range of $[0.5 - 0.5\eta, 0.5 + 0.5\eta]$ for some $\eta \geq 0$.

The left plot of Figure \ref{fig:exp1} shows performative risk incurred by the PO and the DRPO's with various radius $\rho$'s.
Note that the PO can be understood as the DRPO with $\rho = 0$.
As $\rho$ increases, the DRPO aims to achieve more uniform performance across a wider range of $\eps_{\true}$. The middle plot of Figure \ref{fig:exp1} shows relative improvement in worst-case performative risk\footnote{The relative worst-case improvement of $\theta$ to $\theta_{\PO}$ is $\frac{\max_{\epsilon_{\true}} \PR_{\true}(\theta_{\PO})-\max_{\epsilon_{\true}} \PR_{\true}(\theta)}{\max_{\epsilon_{\true}} \PR_{\true}(\theta_{\PO})} \times 100\%$.} of the DRPO to the PO as the radius $\rho$ increases, for different range of misspecification $\eta$'s.
Curves positioned above the horizontal dotted line indicate that the DRPO outperform the PO in terms of worst-case performative risk.
When there is a larger range of misspecification $\eta$, the DRPO has greater potential to beat the PO.
The right plot of Figure \ref{fig:exp1} demonstrates the post-fitting calibration of radius $\rho_{\cal}$ described in Section \ref{sec:calibration}.
The vertical bands indicate the calibrated radius $\rho_{\cal}$'s, which lead to the DRPO's with sound relative worst-case improvement and avoid overconservative solutions, as depicted by the corresponding bands in the middle plot of Figure \ref{fig:exp1}.

\begin{figure*}[t]
     \centering
     \begin{subfigure}[b]{0.32\textwidth}
         \centering
         \includegraphics[width=\textwidth]{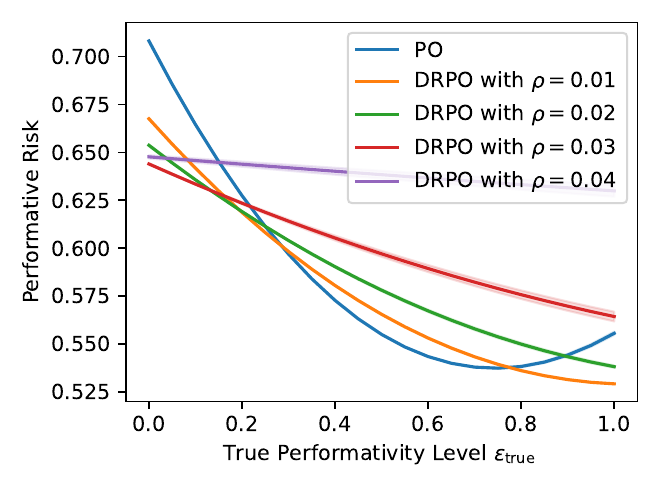}
     \end{subfigure}
     \hfill
     \begin{subfigure}[b]{0.32\textwidth}
         \centering
         \includegraphics[width=\textwidth]{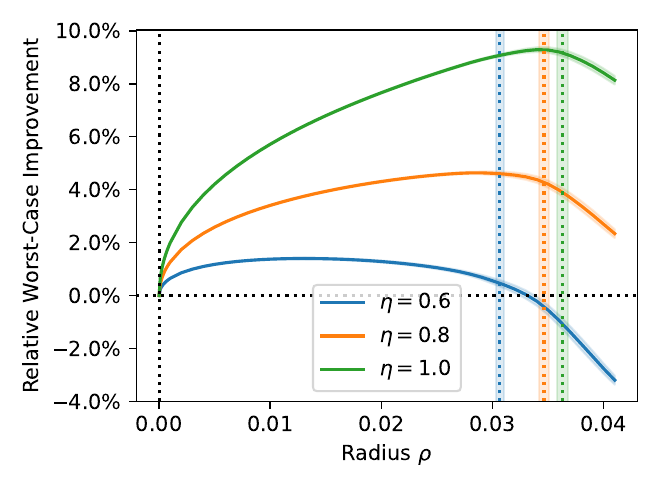}
     \end{subfigure}
     \hfill
     \begin{subfigure}[b]{0.32\textwidth}
         \centering
         \includegraphics[width=\textwidth]{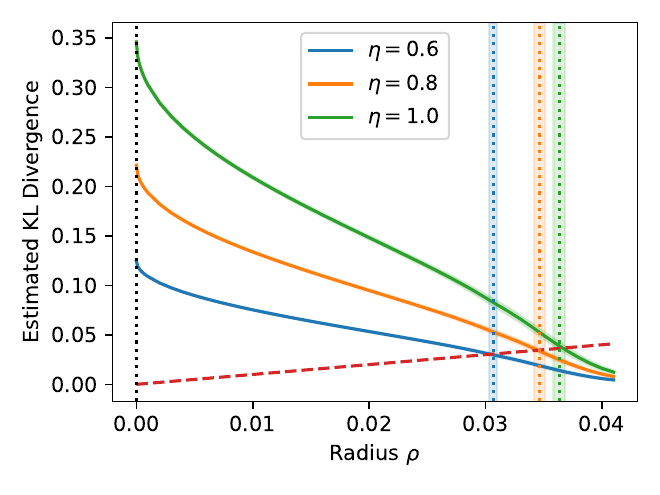}
     \end{subfigure}
        \caption{Results of Experiment \ref{sec:exp1}. Left: performative risk incurred by the PO and the DRPO's with various radius $\rho$'s. Middle: relative improvement in worst-case performative risk of the DRPO to the PO as the radius $\rho$ increases, for different range of misspecification $\eta$'s. Right: radius $\rho$ versus estimated KL divergence between $\cD_{\true}(\theta_{\DRPO}(\rho))$ and $\cD(\theta_{\DRPO}(\rho))$, where vertical bands indicate the calibrated radius $\rho_{\cal}$'s.\vspace{-1em}}
        \label{fig:exp1}
\end{figure*}

\begin{figure*}[t]
     \centering
     \begin{subfigure}[b]{0.32\textwidth}
         \centering
         \includegraphics[width=\textwidth]{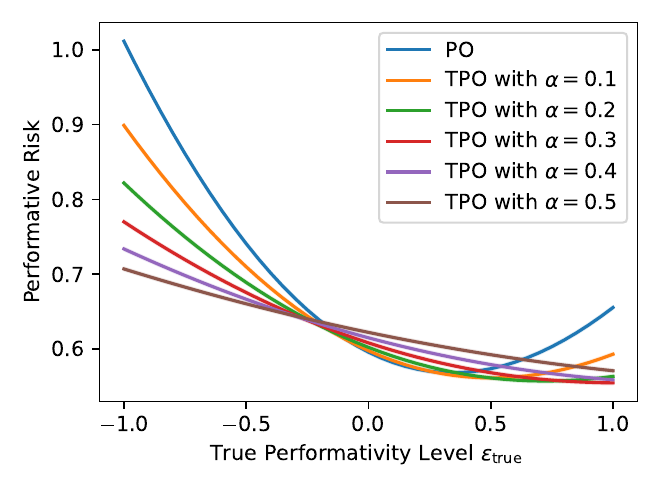}
     \end{subfigure}
     \hfill
     \begin{subfigure}[b]{0.32\textwidth}
         \centering
         \includegraphics[width=\textwidth]{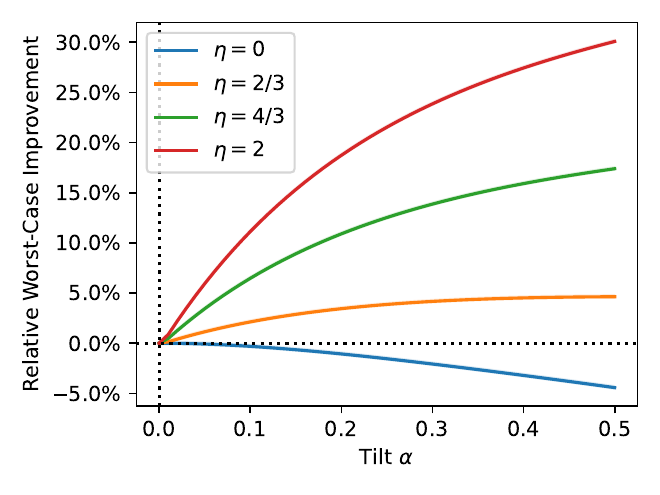}
     \end{subfigure}
     \hfill
     \begin{subfigure}[b]{0.32\textwidth}
         \centering
         \includegraphics[width=\textwidth]{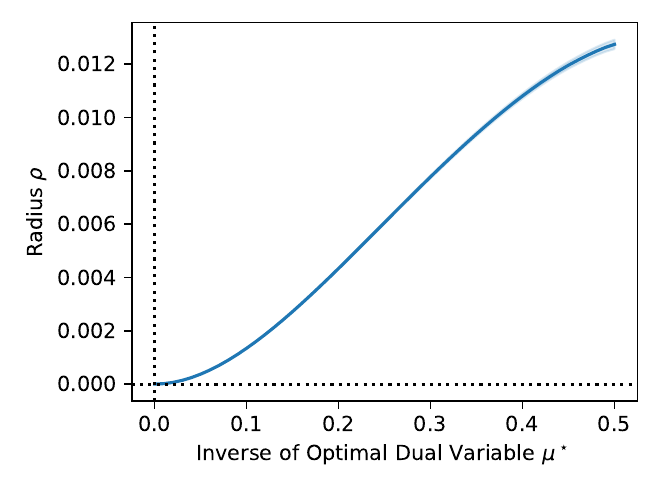}
     \end{subfigure}
        \caption{Results of Experiment \ref{sec:exp2}. Left: performative risk incurred by the PO and the TPO's with various tilt $\alpha$'s. Middle: relative improvement in worst-case performative risk of the TPO to the PO as the tilt $\alpha$ increases, for different range of misspecification $\eta$'s. Right: the correspondence relationship between the radius $\rho$ and the (inverse of) optimal dual variable $\mu^\star$.\vspace{-1em}}
        \label{fig:exp2}
\end{figure*}

\subsection{Partially Identifiable Distribution Map}\label{sec:exp2}

Recall Example \ref{ex:location-family}, we examine a location model for distribution map where the misspecification arises from the estimation error of the model parameter.
Let $V = \left[\theta_1 - \theta_0 \mid \theta_2 - \theta_0 \mid \cdots \mid \theta_K - \theta_0 \right] \in \bbR^{d \times K}$
and
$U = \left[
\mu_1 - \mu_0 \mid \mu_2 - \mu_0 \mid \cdots \mid \mu_K - \mu_0 \right] \in \bbR^{d \times K}$
where $d = \operatorname{dim}(\Theta)$. The unknown parameter $A$ can only be partially indentified through the equation $AV = U$ when $K < d$. A particular estimate of $A$ is $\widehat{A} = UV^{\dagger} = U(V^\top V)^{-1}V^\top$,
where $V^{\dagger}$ is the Moore-Penrose inverse of $V$. In fact, the parameter $A$ is only identifiable up to the subspace $\cW = \{UV^\dagger + E\mid \operatorname{span}\{E^\top\}\subset \cN(V^\top)\}$,
where $\cN(V^\top)$ is the left null space of $V$. Precicely, we have $ AV = U$ if and only if $A \in \cW$.

In this experiment, we still use the credit data. We observe sampling distribution of $\cD(\zeros)$, $\cD(\be_1)$, and $\cD(\be_2)$, where $\be_i$ is the $i$-th canonical basis. For the true distribution map, the performativity of the first two features is 0.5, while the performativity of the other $7$ features is $\eps_{\true}$. In short, $A_{\true} = \operatorname{diag}(0.5, 0.5, \epsilon_{\true}, \ldots, \epsilon_{\true})$.
We set the range of $\eps_{\true}$ to be $[-0.5\eta, 0.5\eta]$ for $\eta \geq 0$.
By using the estiamte $\widehat{A}$, we model the performativity of the first two features correctly, but wrongly model the other features as non-strategic. This time we fit TPO by Algorithm \ref{alg:2} instead of DRPO.

The left plot of Figure \ref{fig:exp2} shows performative risk incurred by the PO and the TPO's with various tilt $\alpha$'s. As $\alpha$ increases, the TPO performs more uniformly across a wider range of $\eps_{\true}$. The middle plot of Figure \ref{fig:exp2} shows relative improvement in worst-case performative risk of the TPO to the PO as the tilt $\alpha$ increases, for different range of misspecification $\eta$'s. Without misspeicification, $\eta = 0$, the PO is for sure better than the TPO. With moderate to large misspecification, $\eta \in\{2/3, 4/3, 2\}$, the TPO demonstrates certain advantages over the PO.
The right of Figure \ref{fig:exp2} shows the relationship between the distributionally robust performative risk minimization and the tilted performative risk minimization: fitting DRPO with $\rho$ (which returns the optimal dual variable $\mu^\star$) is equivalent to fitting TPO with $\alpha = 1/\mu^{\star}$.

\subsection{Fairness without Demographics}\label{sec:exp3}

\begin{wrapfigure}{o}{0.5\textwidth}
\vspace{-1.5em}
  \begin{center}
    \includegraphics[width=0.48\textwidth]{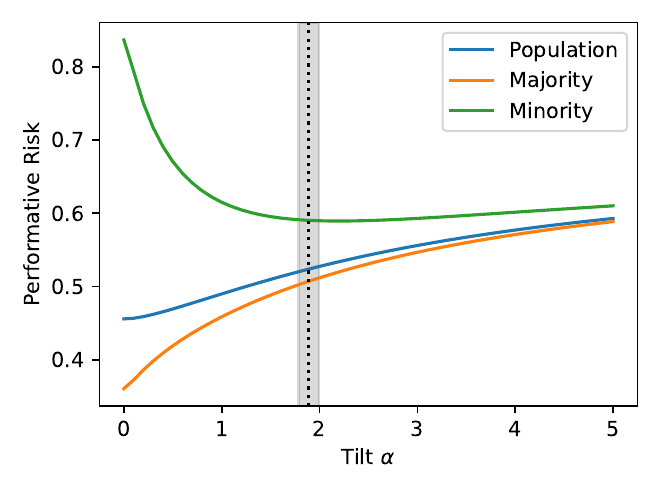}
  \end{center}
  \caption{Results of Experiment \ref{sec:exp3}. Performative risk of the population, the majority, and the minority, as the tilt $\alpha$ increases. The vertical band indicates the calibrated tilt $\alpha_{\cal}$'s.\vspace{-2em}}
    \label{fig:exp3-risk}
\end{wrapfigure}

Referencing to Example \ref{ex:fairness}, we examine the scenario where the population distribution map is a mixture of two subpopulation distribution maps. We train a classifier using the population distribution map $\cD_{\population}$, but target at its performance on both the majority and minority, $\cD_{\major}$ and $\cD_{\minor}$. The distribution map is therefore misspecified due to subpopulation shift.

The experimental setup resembles that in \cite{peet2022long}.
Note that the credit dataset used in the previous experiments lacks demographic features. To enable oracle access to demographic information, synthetic data is generated for
a performative classification task.
The synthetic dataset exemplifies a scenario in which a linear decision boundary is not able to effectively classify both the majority and minority groups, necessitating a trade-off between them.

Figure \ref{fig:exp3-risk} shows performative risk of the population, the majority, and the minority incurred by the TPO, as the tilt $\alpha$ increases.
The PO (which is TPO with $\alpha = 0$) exhibits the lowest performative risk at the population, but the greatest disparity between its performance for the majority and minority groups.
As $\alpha$ increases, the TPO reduces the performance gap between the two groups at the expense of an increased population performative risk.
This suggests that the distributionally robust performative prediction framework has the potential to mitigate unfairness towards the minority group, even in the absence of demographic information.
Using a small calibration set with demographics, we can calibrate the tilt $\alpha_{\cal}$'s via the calibration set approach described in Section \ref{sec:calibration}.
The goal is to calibrate the tilt to satisfy a four-fifth rule\footnote{The minority's performative risk is not 25\% higher than that of the majority, as motivated by the four-fifth rule documented in Uniform Guidelines on Employment Selection Procedures, 29 C.F.R. §1607.4(D) (2015).} with minimal population performative risk. The vertical bands in Figure \ref{fig:exp3-risk} shows the calibrated tilt $\alpha_{\cal}$'s reasonably meet the goal.

\section{Summary and Discussion}\label{sec:summary}

In this work, we present a distributionally robust performative prediction framework that aims to improve robustness against a variety of distribution maps, thereby mitigating the problem of distribution map misspecification. We show provable guarantees for DRPO as a robust approximation to the true PO when
the nominal distribution map differs from the actual one. We developed efficient algorithms for minimizing the distributionally robust performative risk. Empirical experiments are conducted to support our theoretical findings and validate the proposed algorithms.

The components of our approach are not new, but we are combining them in a novel way to solve a relevant problem. To be precise, the proposed approach is novel in the context to use the idea of distributional robustness to solve the practical problem of distribution map misspecification in performative prediction. In addition, it is novel to study the solution concept of distributionally robust performative optimum (DRPO), both theoretically and algorithmically.

In Appendix \ref{app:F}, we extend the KL divergence distributionally robust performative prediction framework to a general $\varphi$-divergence distributionally robust performative prediction framework. 
Furthermore, it is possible to go beyond general $\varphi$-divergence.
An extension to a Wasserstein DRO version is a natural direction for future research, calling for the development of new algorithms.\footnote{Due to space constraints, we provide additional materials for discussion (e.g., limitations) in Appendix \ref{app:G}.}

\section*{Acknowledgements}
We used generative AI tools when preparing the manuscript; we remain responsible for all opinions, findings, and conclusions or recommendations expressed in the paper. This paper is based on research supported by the National Science Foundation under grants 2027737, 2113373, and 2414918. 

\bibliography{SK}

\begin{thebibliography}{31}
\providecommand{\natexlab}[1]{#1}
\providecommand{\url}[1]{\texttt{#1}}
\expandafter\ifx\csname urlstyle\endcsname\relax
  \providecommand{\doi}[1]{doi: #1}\else
  \providecommand{\doi}{doi: \begingroup \urlstyle{rm}\Url}\fi

\bibitem[Ahmadi-Javid(2012)]{ahmadi2012entropic}
Amir Ahmadi-Javid.
\newblock Entropic value-at-risk: A new coherent risk measure.
\newblock \emph{Journal of Optimization Theory and Applications}, 155:\penalty0 1105--1123, 2012.

\bibitem[Bayraksan and Love(2015)]{bayraksan2015data}
G{\"u}zin Bayraksan and David~K Love.
\newblock Data-driven stochastic programming using phi-divergences.
\newblock In \emph{The operations research revolution}, pages 1--19. INFORMS, 2015.

\bibitem[Blanchet and Kang(2021)]{blanchet2021sample}
Jose Blanchet and Yang Kang.
\newblock Sample out-of-sample inference based on wasserstein distance.
\newblock \emph{Operations Research}, 69\penalty0 (3):\penalty0 985--1013, 2021.

\bibitem[Blanchet and Murthy(2019)]{blanchet2019quantifying}
Jose Blanchet and Karthyek Murthy.
\newblock Quantifying distributional model risk via optimal transport.
\newblock \emph{Mathematics of Operations Research}, 44\penalty0 (2):\penalty0 565--600, 2019.

\bibitem[Brown et~al.(2022)Brown, Hod, and Kalemaj]{brown2022performative}
Gavin Brown, Shlomi Hod, and Iden Kalemaj.
\newblock Performative prediction in a stateful world.
\newblock In \emph{International Conference on Artificial Intelligence and Statistics}, pages 6045--6061. PMLR, 2022.

\bibitem[Chen and Paschalidis(2018)]{chen2018robust}
Ruidi Chen and Ioannis~C Paschalidis.
\newblock A robust learning approach for regression models based on distributionally robust optimization.
\newblock \emph{Journal of Machine Learning Research}, 19\penalty0 (13), 2018.

\bibitem[Dong et~al.(2023)Dong, Zhang, and Ratliff]{dong2023approximate}
Roy Dong, Heling Zhang, and Lillian Ratliff.
\newblock Approximate regions of attraction in learning with decision-dependent distributions.
\newblock In \emph{International Conference on Artificial Intelligence and Statistics}, pages 11172--11184. PMLR, 2023.

\bibitem[Drusvyatskiy and Xiao(2023)]{drusvyatskiy2023stochastic}
Dmitriy Drusvyatskiy and Lin Xiao.
\newblock Stochastic optimization with decision-dependent distributions.
\newblock \emph{Mathematics of Operations Research}, 48\penalty0 (2):\penalty0 954--998, 2023.

\bibitem[Duchi et~al.(2021)Duchi, Glynn, and Namkoong]{duchi2021statistics}
John~C Duchi, Peter~W Glynn, and Hongseok Namkoong.
\newblock Statistics of robust optimization: A generalized empirical likelihood approach.
\newblock \emph{Mathematics of Operations Research}, 2021.

\bibitem[Gao and Kleywegt(2023)]{gao2023distributionally}
Rui Gao and Anton Kleywegt.
\newblock Distributionally robust stochastic optimization with wasserstein distance.
\newblock \emph{Mathematics of Operations Research}, 48\penalty0 (2):\penalty0 603--655, 2023.

\bibitem[Gotoh et~al.(2018)Gotoh, Kim, and Lim]{gotoh2018robust}
Jun-ya Gotoh, Michael~Jong Kim, and Andrew~EB Lim.
\newblock Robust empirical optimization is almost the same as mean--variance optimization.
\newblock \emph{Operations research letters}, 46\penalty0 (4):\penalty0 448--452, 2018.

\bibitem[Gupta(2019)]{gupta2019near}
Vishal Gupta.
\newblock Near-optimal bayesian ambiguity sets for distributionally robust optimization.
\newblock \emph{Management Science}, 65\penalty0 (9):\penalty0 4242--4260, 2019.

\bibitem[Hu and Hong(2013)]{hu2013kullback}
Zhaolin Hu and L~Jeff Hong.
\newblock Kullback-leibler divergence constrained distributionally robust optimization.
\newblock \emph{Available at Optimization Online}, 1\penalty0 (2):\penalty0 9, 2013.

\bibitem[Izzo et~al.(2021)Izzo, Ying, and Zou]{izzo2021learn}
Zachary Izzo, Lexing Ying, and James Zou.
\newblock How to learn when data reacts to your model: performative gradient descent.
\newblock In \emph{International Conference on Machine Learning}, pages 4641--4650. PMLR, 2021.

\bibitem[Izzo et~al.(2022)Izzo, Zou, and Ying]{izzo2022learn}
Zachary Izzo, James Zou, and Lexing Ying.
\newblock How to learn when data gradually reacts to your model.
\newblock In \emph{International Conference on Artificial Intelligence and Statistics}, pages 3998--4035. PMLR, 2022.

\bibitem[Kaggle(2012)]{creditdata}
Kaggle.
\newblock Give me some credit.
\newblock \url{https://www.kaggle.com/c/GiveMeSomeCredit/data}, 2012.

\bibitem[Keener(2010)]{keener2010theoretical}
Robert~W Keener.
\newblock \emph{Theoretical statistics: Topics for a core course}.
\newblock Springer Science \& Business Media, 2010.

\bibitem[Kim and Perdomo(2023)]{kim2023making}
Michael~P Kim and Juan~C Perdomo.
\newblock Making decisions under outcome performativity.
\newblock In \emph{14th Innovations in Theoretical Computer Science Conference (ITCS 2023)}. Schloss-Dagstuhl-Leibniz Zentrum f{\"u}r Informatik, 2023.

\bibitem[Kleywegt et~al.(2002)Kleywegt, Shapiro, and Homem-de Mello]{kleywegt2002sample}
Anton~J Kleywegt, Alexander Shapiro, and Tito Homem-de Mello.
\newblock The sample average approximation method for stochastic discrete optimization.
\newblock \emph{SIAM Journal on optimization}, 12\penalty0 (2):\penalty0 479--502, 2002.

\bibitem[Lam(2019)]{lam2019recovering}
Henry Lam.
\newblock Recovering best statistical guarantees via the empirical divergence-based distributionally robust optimization.
\newblock \emph{Operations Research}, 67\penalty0 (4):\penalty0 1090--1105, 2019.

\bibitem[Li and Wai(2022)]{li2022state}
Qiang Li and Hoi-To Wai.
\newblock State dependent performative prediction with stochastic approximation.
\newblock In \emph{International Conference on Artificial Intelligence and Statistics}, pages 3164--3186. PMLR, 2022.

\bibitem[Li et~al.(2020)Li, Beirami, Sanjabi, and Smith]{li2020tilted}
Tian Li, Ahmad Beirami, Maziar Sanjabi, and Virginia Smith.
\newblock Tilted empirical risk minimization.
\newblock \emph{arXiv preprint arXiv:2007.01162}, 2020.

\bibitem[Li et~al.(2023)Li, Beirami, Sanjabi, and Smith]{li2023tilted}
Tian Li, Ahmad Beirami, Maziar Sanjabi, and Virginia Smith.
\newblock On tilted losses in machine learning: Theory and applications.
\newblock \emph{Journal of Machine Learning Research}, 24\penalty0 (142):\penalty0 1--79, 2023.

\bibitem[Lin and Zrnic(2023)]{lin2023plug}
Licong Lin and Tijana Zrnic.
\newblock Plug-in performative optimization.
\newblock \emph{arXiv preprint arXiv:2305.18728}, 2023.

\bibitem[Mendler-D{\"u}nner et~al.(2020)Mendler-D{\"u}nner, Perdomo, Zrnic, and Hardt]{mendler2020stochastic}
Celestine Mendler-D{\"u}nner, Juan Perdomo, Tijana Zrnic, and Moritz Hardt.
\newblock Stochastic optimization for performative prediction.
\newblock \emph{Advances in Neural Information Processing Systems}, 33:\penalty0 4929--4939, 2020.

\bibitem[Miller et~al.(2021)Miller, Perdomo, and Zrnic]{miller2021outside}
John~P Miller, Juan~C Perdomo, and Tijana Zrnic.
\newblock Outside the echo chamber: Optimizing the performative risk.
\newblock In \emph{International Conference on Machine Learning}, pages 7710--7720. PMLR, 2021.

\bibitem[Mohajerin~Esfahani and Kuhn(2018)]{mohajerin2018data}
Peyman Mohajerin~Esfahani and Daniel Kuhn.
\newblock Data-driven distributionally robust optimization using the wasserstein metric: performance guarantees and tractable reformulations.
\newblock \emph{Mathematical Programming}, 171\penalty0 (1-2):\penalty0 115--166, 2018.

\bibitem[Peet-Pare et~al.(2022)Peet-Pare, Hegde, and Fyshe]{peet2022long}
Liam Peet-Pare, Nidhi Hegde, and Alona Fyshe.
\newblock Long term fairness for minority groups via performative distributionally robust optimization.
\newblock \emph{arXiv preprint arXiv:2207.05777}, 2022.

\bibitem[Perdomo et~al.(2020)Perdomo, Zrnic, Mendler-D{\"u}nner, and Hardt]{perdomo2020performative}
Juan Perdomo, Tijana Zrnic, Celestine Mendler-D{\"u}nner, and Moritz Hardt.
\newblock Performative prediction.
\newblock In \emph{International Conference on Machine Learning}, pages 7599--7609. PMLR, 2020.

\bibitem[Ray et~al.(2022)Ray, Ratliff, Drusvyatskiy, and Fazel]{ray2022decision}
Mitas Ray, Lillian~J Ratliff, Dmitriy Drusvyatskiy, and Maryam Fazel.
\newblock Decision-dependent risk minimization in geometrically decaying dynamic environments.
\newblock In \emph{Proceedings of the AAAI Conference on Artificial Intelligence}, volume~36, pages 8081--8088, 2022.

\bibitem[Shafieezadeh-Abadeh et~al.(2019)Shafieezadeh-Abadeh, Kuhn, and Esfahani]{shafieezadeh2019regularization}
Soroosh Shafieezadeh-Abadeh, Daniel Kuhn, and Peyman~Mohajerin Esfahani.
\newblock Regularization via mass transportation.
\newblock \emph{Journal of Machine Learning Research}, 20\penalty0 (103):\penalty0 1--68, 2019.

\end{thebibliography}
\bibliographystyle{plainnat}


\appendix

\vspace{0.5em}
{\centering \Large \textbf{Supplementary Materials for}\\}
{\centering \Large \textbf{Distributionally Robust Performative Prediction}\\}

\vspace{1em}
This supplementary materials contain the omitted details, technical proofs, and additional results pertaining to the main article ``Distributionally Robust Performative Prediction''. In Section \ref{app:A}, the missing deriving steps for the toy example in Section \ref{sec:toy-example} are provided. In Section \ref{app:B}, we provide a characterization of the worst-case distribution map which attains the supremum in \eqref{eq:DRPR}, and show that the DR performative prediction regulates the right tail of the performative losses. In Section \ref{app:generalized-excess-risk-bound}, we show a generalized excess risk bound for the DRPO. In Section \ref{app:C}, all of the deferred proofs are presented. In Section \ref{sec:PR-minimization}, we recall a standard algorithm for performative risk minimization. In Section \ref{app:explanation}, we explain the claim in Section \ref{sec:tilted-PR-minimization}. In Section \ref{app:D}, we provide omitted experimental details and additional empirical results. In Section \ref{app:F}, we generalize the KL divergence DR performative prediction framework to a general $\varphi$-divergence DR performative prediction version, and propose an algorithm to find the associated DRPO. In Section \ref{app:G}, we provide additional materials for discussion.

\section{Toy Example in Section \ref{sec:toy-example} (Continued)}\label{app:A}

Recall that $\cZ = \bbR$, $\Theta = [-1, 1]$, $\ell(z;\theta) = \theta z$, and the nominal distribution map is $\cD(\theta) = \cN(f(\theta), \sigma^2)$ for some $f: [-1,1] \to\bbR$ and $\sigma^2 > 0$. Then the nominal performative risk is 
\begin{equation*}
    \PR(\theta) = \bbE_{Z\sim\cN(f(\theta), \sigma^2)}[\ell(Z;\theta)] = \theta f(\theta).
\end{equation*}
We firstly show that the distributionally robust performative risk is given by
\[
\DRPR(\theta) = \PR(\theta) + \sqrt{\rho}\pen(\theta),
\]
where the penalty function $\pen(\theta) = \sqrt{2\sigma^2}\left|\theta\right|$.
\begin{proof}
By Proposition \ref{prop:dual-1-var}, the strong duality of DRPR, and the well-established moment generating function of Gaussian distribution, we have 
\begin{align*}
    \DRPR(\theta) &= \sup_{\tD: \tD \in \cU(\cD)} \bbE_{Z\sim \tD(\theta)}[\ell(Z;\theta)] \\
    &= \inf_{\mu\geq 0} \left\{\mu \log \bbE_{Z\sim\cD(\theta)}\left[e^{\ell(Z;\theta)/\mu}\right] + \mu\rho\right\} \\
    &= \inf_{\mu \geq 0} \left\{\mu \left[\frac{\theta f(\theta)}{\mu} + \frac{\sigma^2\theta^2}{2\mu^2}\right] + \mu\rho\right\} \\
    &= \theta f(\theta) + \inf_{\mu \geq 0} \left\{\frac{\sigma^2 \theta^2}{2\mu} + \mu\rho\right\} \\
    &= \theta f(\theta) + \sqrt{2\rho\sigma^2}\left|\theta\right| = \PR(\theta) + \sqrt{\rho}\pen(\theta).
\end{align*}
Therefore, we derive an alternative form of $\DRPR(\theta)$, which is $\PR(\theta)$ with an $L_1$-regularization term.
\end{proof}

Now we are in a more concrete setup that $f(\theta) = a_1 \theta + a_0$ for some $a_1, a_0 >0$. Recall that for any $\tD \in \cU(\cD)$, we denote the performative risk of $\tD$ be
\begin{equation*}
    \PR_{\tD}(\theta) = \bbE_{Z\sim\tD(\theta)}[\ell(Z;\theta)].
\end{equation*}
We secondly show that 
\begin{equation*}
    \sup_{\tD\in \cU(\cD)} \PR_{\tD}(\theta_{\PO}) \geq \sup_{\tD\in \cU(\cD)} \PR_{\tD}(\theta_{\DRPO}) + \frac{\rho\sigma^2}{2a_1}
\end{equation*}
for any fixed $\rho \leq \frac{a_0^2}{2\sigma^2}$.
\begin{proof}
In fact, we have
\begin{equation*}
    \theta_{\PO} = \frac{- a_0}{2 a_1}, \theta_{\DRPO} = \frac{\sqrt{2\rho\sigma^2} - a_0}{2 a_1}, \text{and}~\DRPR(\theta_{\DRPO}) = \frac{-(a_0 - \sqrt{2\rho \sigma^2})^2}{4a_1}.
\end{equation*}
By Proposition \ref{prop:principle}, the generalization principle of distributionally robust performative prediction, we have
\begin{equation*}
    \sup_{\tD\in \cU(\cD)} \PR_{\tD}(\theta_{\DRPO}) \leq \DRPR(\theta_{\DRPO}) = \frac{-(a_0 - \sqrt{2\rho \sigma^2})^2}{4a_1}.
\end{equation*}
On the other hand, consider an \emph{adversarial distribution map} $\cD_{\adv}(\theta) = \cN(a_1\theta + a_0 - \sqrt{2\rho \sigma^2}, \sigma^2)$. One can show that $D_{\KL}(\cD_{\adv}(\theta), \cD(\theta)) = \rho$ for all $\theta \in \Theta$, and therefore $\cD_{\adv} \in \cU(\cD)$. Then
\begin{equation*}
    \sup_{\tD\in \cU(\cD)} \PR_{\tD}(\theta_{\PO}) \geq \PR_{\cD_{\adv}}(\theta_{\PO}) = \frac{-(a_0 - \sqrt{2\rho \sigma^2})^2}{4a_1} + \frac{\rho\sigma^2}{2a_1}.
\end{equation*}
Therefore we complete the proof. 
\end{proof}

\textbf{Multi-variate case.} We extend the uni-variate example to multi-variate case. 
Let $\cZ = \bbR^d$, $\Theta = [-1, 1]^d$, and $\ell(z;\theta) = \theta^\top z$. Let the distribution map be $\cD(\theta) = \cN_d(f(\theta), \Sigma)$ for some $f: [-1,1]^d \to\bbR^d$ and $\Sigma$ is positive semi-definite. One can show that the performative risk is
\begin{equation*}
    \PR(\theta) = \bbE_{Z\sim \cD(\theta)}[\ell(Z;\theta)] = \bbE_{Z\sim\cN_d(f(\theta), \Sigma)}[\ell(Z;\theta)] = \theta^\top f(\theta)
\end{equation*}
and the distributionally robust performative risk is
\begin{equation}\label{eq:DRPR-multi-variate}
    \DRPR(\theta) = \sup_{\tD: \tD \in \cU(\cD)} \bbE_{Z\sim \tD(\theta)}[\ell(Z;\theta)] = \theta^\top f(\theta) + \sqrt{2\rho \theta^\top \Sigma \theta} = \PR(\theta) + \sqrt{\rho}\pen(\theta),
\end{equation}
where the penalty function $\pen(\theta) = \sqrt{2\theta^\top \Sigma \theta}$ penalizes the deviation of $\theta$ from the origin $0$, and the critical radius $\rho$ tunes the level of regularization. We also note that in this case the distributionally robust performative prediction problem can be transformed to a \emph{second-order cone constrained program} \eqref{eq:DRPR-multi-variate}.

\section{Worst-Case Distribution Map}\label{app:B}

As a byproduct of the proof of Proposition \ref{prop:dual-1-var}, here we provide a characterization of the worst-case distribution map which attains the supremum in \eqref{eq:DRPR}.

\begin{proposition}[Worst-case distribution map]\label{prop:worst-case-distribution-map}
    Suppose the problem \eqref{eq:dual-1-var} is solved by the unique $\mu^\star(\theta) > 0$ for any $\theta \in \Theta$. We define a distribution map $\tD$ satisfying the density ratio equation
    \begin{equation}
        \frac{d\tD(\theta)}{d\cD(\theta)} = \frac{e^{\ell(Z;\theta)/\mu^\star(\theta)}}{\bbE_{Z\sim\cD(\theta)}[e^{\ell(Z;\theta)/\mu^\star(\theta)}]
        },\quad \forall \theta \in \Theta.
    \end{equation}
    Then we have that $\tD$ is the unique worst-case distribution map, that is,
    \begin{equation}
        \tD(\theta) = \underset{\tD(\theta): \tD \in \cU(\cD)}{\arg\sup} \bbE_{Z\sim \tD(\theta)}[\ell(Z;\theta)].
    \end{equation}
\end{proposition}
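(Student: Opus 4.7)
The plan is to exploit the strong duality result in Proposition \ref{prop:dual-1-var} together with a Lagrangian/KKT argument, and then invoke strict convexity of the KL divergence to get uniqueness. The key observation is that both the objective $\bbE_{Z\sim\tD(\theta)}[\ell(Z;\theta)]$ and the constraint $D(\tD(\theta)\|\cD(\theta))\leq \rho$ in the definition of $\DRPR(\theta)$ decouple across different $\theta$'s, so we may fix $\theta$ and study the pointwise problem $\sup_{Q:\,D(Q\|\cD(\theta))\leq \rho}\bbE_{Z\sim Q}[\ell(Z;\theta)]$.

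First, I would form the Lagrangian $L(Q,\mu)=\bbE_{Z\sim Q}[\ell(Z;\theta)]-\mu(D(Q\|\cD(\theta))-\rho)$ over probability measures $Q\ll \cD(\theta)$, and maximize $L(\cdot,\mu)$ with $\mu>0$ held fixed. A standard Donsker--Varadhan / Gibbs-variational calculation (which is precisely the ingredient behind Proposition \ref{prop:dual-1-var}) shows the unconstrained inner maximizer is the tilted density
\[
\frac{dQ_\mu^\star}{d\cD(\theta)} \;=\; \frac{e^{\ell(Z;\theta)/\mu}}{\bbE_{Z\sim\cD(\theta)}[e^{\ell(Z;\theta)/\mu}]},
\]
with dual value $\mu\log\bbE_{Z\sim\cD(\theta)}[e^{\ell(Z;\theta)/\mu}]+\mu\rho$. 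Setting $\mu=\mu^\star(\theta)$ (the assumed unique dual minimizer) yields a candidate $Q^\star_{\mu^\star(\theta)}$ that matches the definition of $\tD(\theta)$ in the proposition.

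Next I would verify that $\tD(\theta)$ is feasible and attains the supremum. Feasibility follows from complementary slackness at the dual optimum: since $\mu^\star(\theta)>0$, the constraint is active, so $D(\tD(\theta)\|\cD(\theta))=\rho\leq\rho$, placing $\tD$ in $\cU(\cD)$. Optimality follows by substitution: a direct computation shows
\[
\bbE_{Z\sim\tD(\theta)}[\ell(Z;\theta)] \;=\; \mu^\star(\theta)\log\bbE_{Z\sim\cD(\theta)}\bigl[e^{\ell(Z;\theta)/\mu^\star(\theta)}\bigr]+\mu^\star(\theta)\rho,
\]
which by Proposition \ref{prop:dual-1-var} equals $\DRPR(\theta)$. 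Thus $\tD(\theta)$ attains the supremum.

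Finally, for uniqueness, I would leverage strict convexity: the map $Q\mapsto D(Q\|\cD(\theta))$ is strictly convex on probability measures absolutely continuous with respect to $\cD(\theta)$, while $Q\mapsto \bbE_{Z\sim Q}[\ell(Z;\theta)]$ is linear. Hence the constrained maximizer of a linear functional over the strictly convex sublevel set $\{Q:D(Q\|\cD(\theta))\leq \rho\}$ is unique whenever it lies on the boundary, which we have just established via complementary slackness. I expect the main obstacle to be the careful handling of the infinite-dimensional Lagrangian argument (existence of an interior feasible point, differentiability of the partition function $\mu\mapsto \mu\log\bbE[e^{\ell/\mu}]$, and rigorously justifying the exchange of $\sup$ and $\inf$); however, these details can largely be imported from the proof of Proposition \ref{prop:dual-1-var}, reducing the remaining work here to the straightforward verification and uniqueness steps sketched above.
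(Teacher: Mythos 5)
Your proposal is correct, but it takes a different route from the paper in the sense that the paper does not actually prove this proposition at all: it simply cites Proposition~1 of \cite{hu2013kullback} and the discussion following it. What you have written is essentially a self-contained reconstruction of that imported argument --- fix $\theta$, form the Lagrangian, identify the inner maximizer as the exponentially tilted measure via the Gibbs/Donsker--Varadhan variational formula, and then verify feasibility, optimality, and uniqueness. The paper's approach buys brevity at the cost of opacity; yours makes the mechanism visible, in particular why the worst case is an exponential tilt and why $\mu^\star(\theta)>0$ forces the KL constraint to be active. Two small points where your sketch should be tightened. First, ``complementary slackness'' is a slightly indirect justification for $D(\tD(\theta)\,\|\,\cD(\theta))=\rho$; the cleanest route is the first-order condition of the dual: a direct computation gives $\frac{d}{d\mu}\left\{\mu\log\bbE_{Z\sim\cD(\theta)}[e^{\ell(Z;\theta)/\mu}]+\mu\rho\right\}=\rho-D(Q_\mu\|\cD(\theta))$ where $Q_\mu$ is the tilt at temperature $\mu$, so stationarity at the interior minimizer $\mu^\star(\theta)>0$ yields exactly $D(Q_{\mu^\star}\|\cD(\theta))=\rho$. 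Second, your uniqueness argument as stated (``the maximizer of a linear functional over a strictly convex sublevel set is unique whenever it lies on the boundary'') needs one more sentence: if $Q_1\neq Q_2$ were both maximizers, strict convexity of $Q\mapsto D(Q\|\cD(\theta))$ would place their midpoint strictly inside the ball while still attaining the supremum, and an interior maximizer of a non-constant linear functional is impossible (and $\ell(\cdot;\theta)$ cannot be $\cD(\theta)$-a.s.\ constant, since that would force $\mu^\star(\theta)=0$, contradicting the hypothesis). With those two repairs your argument is complete and arguably more informative than the citation the paper relies on.
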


Proposition \ref{prop:worst-case-distribution-map} shows that the worst-case distribution map $\tD$ is an \emph{exponentially tilted} distribution map with respect to the nominal distribution map $\cD$, where $\tD$ puts more weights on the high ends.

Figure \ref{fig:histogram} shows histogram of performative loss for the PO, the DRPO with $\rho = 0.02$, and the DRPO with $\rho = 0.04$, under the setup of Experiment \ref{sec:exp1} with $\eps_{\true} = 0.5$.
These plots displayed in a left-to-right manner demonstrate that the DRPO regulates the right tail of the performative losses.
Moreover, as the radius $\rho$ increases, there is a corresponding increase in the degree of regulation effects.

\begin{figure*}[h]
     \centering
     \begin{subfigure}[b]{0.32\textwidth}
         \centering
         \includegraphics[width=\textwidth]{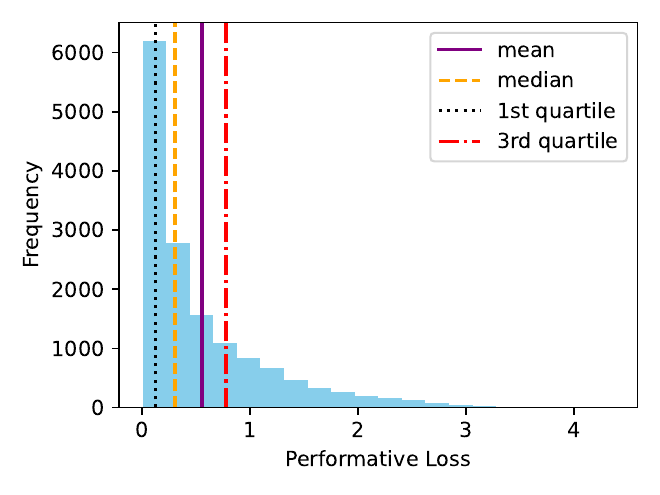}
     \end{subfigure}
     \hfill
     \begin{subfigure}[b]{0.32\textwidth}
         \centering
         \includegraphics[width=\textwidth]{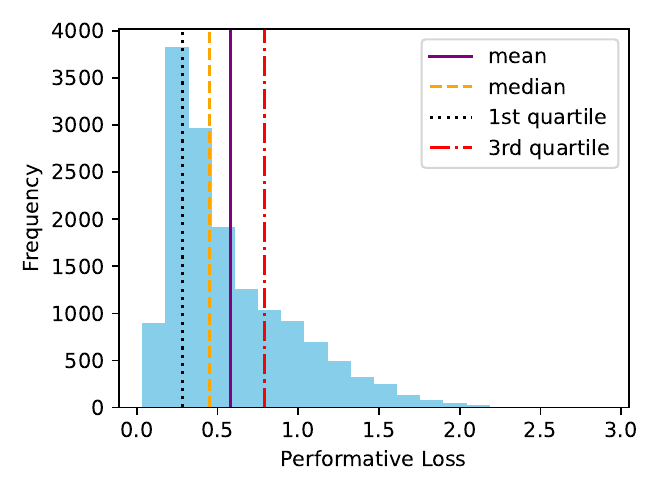}
     \end{subfigure}
     \hfill
     \begin{subfigure}[b]{0.32\textwidth}
         \centering
         \includegraphics[width=\textwidth]{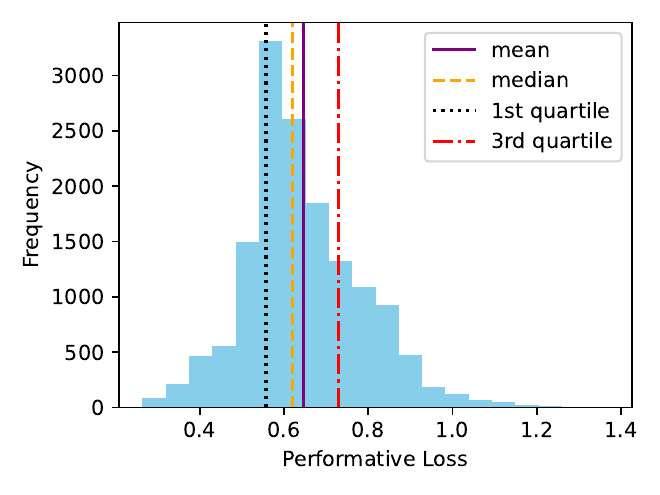}
     \end{subfigure}
        \caption{Histogram of performative loss under Experiment \ref{sec:exp1} with $\eps_{\true} = 0.5$. Left: histogram for the PO. Middle: histogram for the DRPO with $\rho = 0.02$. Right: hitogram for the DRPO with $\rho = 0.04$.}
        \label{fig:histogram}
\end{figure*}

\section{Generalized Excess Risk Bound}
\label{app:generalized-excess-risk-bound}
We upper bound $\cE(\theta_{\DRPO})$ when the uncertainty collection doesn't cover the true distribution map. To be specific, we analyze the excess risk bound of the DRPO when $D(\cD_{\true}(\theta_{\DRPO}) \| \cD(\theta_{\DRPO})) > \rho$.

\begin{proposition}[Excess risk bound of the DRPO in general]\label{prop:excess-risk-DRPO-general}
    Suppose that we have bounded loss function $|\ell(z;\theta)|\leq B$ for any $z\in \cZ, \theta\in\Theta$ and some $B >0$. Then we have
    \begin{equation}\label{eq:excess-risk-DRPO-general}
    \begin{aligned}
        \cE(\theta_{\DRPO}) \leq &~\sqrt{\rho\Var_{Z\sim\cD(\theta_{\PO,\true})}[\ell(Z;\theta_{\PO,\true})]} + o(\sqrt{\rho})\\
        &~+\sqrt{2}B\inf_{P:D(P\|\cD(\theta_{\DRPO}))\leq\rho}\sqrt{D(\cD_{\true}(\theta_{\DRPO})\| P)}.
    \end{aligned}
    \end{equation}
\end{proposition}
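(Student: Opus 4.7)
The plan is to interpolate between the true conditional $\cD_{\true}(\theta_{\DRPO})$ (which may now sit outside the $\rho$-KL ball around $\cD(\theta_{\DRPO})$) and a ``pivot'' distribution $P$ sitting inside that ball, then apply the reasoning of Proposition \ref{prop:excess-risk-DRPO} to the pivot and pay a separate Pinsker-type cost for moving from $\cD_{\true}(\theta_{\DRPO})$ to $P$. Concretely, fix any $P$ with $D(P\|\cD(\theta_{\DRPO}))\leq\rho$ and write
\begin{equation*}
\cE(\theta_{\DRPO}) = \underbrace{\bbE_{\cD_{\true}(\theta_{\DRPO})}[\ell(Z;\theta_{\DRPO})] - \bbE_{P}[\ell(Z;\theta_{\DRPO})]}_{\text{shift cost}} \;+\; \underbrace{\bbE_{P}[\ell(Z;\theta_{\DRPO})] - \PR_{\true}(\theta_{\PO,\true})}_{\text{suboptimality gap}}.
\end{equation*}

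For the shift cost, boundedness $|\ell|\leq B$ together with Pinsker's inequality, exactly as in the proof of Proposition \ref{prop:excess-risk-PO}, delivers the bound $\sqrt{2}B\sqrt{D(\cD_{\true}(\theta_{\DRPO})\|P)}$. For the suboptimality gap, lift $P$ to a distribution map $\tD\in\cU(\cD)$ by setting $\tD(\theta_{\DRPO})=P$ and $\tD(\theta)=\cD(\theta)$ otherwise; this is a feasible element of the uncertainty collection since $D(\tD(\theta)\|\cD(\theta))$ equals $0$ or is bounded by $\rho$ for every $\theta$. Consequently $\bbE_P[\ell(Z;\theta_{\DRPO})]\leq\DRPR(\theta_{\DRPO})$, and the optimality of $\theta_{\DRPO}$ yields $\DRPR(\theta_{\DRPO})\leq\DRPR(\theta_{\PO,\true})$. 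Reusing the asymptotic expansion of $\DRPR(\theta_{\PO,\true})-\PR_{\true}(\theta_{\PO,\true})$ established in the proof of Proposition \ref{prop:excess-risk-DRPO} then bounds the suboptimality gap by $\sqrt{\rho\Var_{Z\sim\cD(\theta_{\PO,\true})}[\ell(Z;\theta_{\PO,\true})]} + o(\sqrt{\rho})$.

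Combining the two bounds, noting that only the shift cost depends on $P$, and taking $\inf$ over $\{P:D(P\|\cD(\theta_{\DRPO}))\leq\rho\}$ on the right-hand side yields \eqref{eq:excess-risk-DRPO-general}. The main obstacle is ensuring that the $\sqrt{\rho}$-expansion inherited from Proposition \ref{prop:excess-risk-DRPO} still applies here, since we no longer assume $\cD_{\true}(\theta_{\DRPO})\in\cU(\cD)$; fortunately that expansion only uses regularity of $\cD$ near $\theta_{\PO,\true}$ and not the feasibility of $\cD_{\true}$ at $\theta_{\DRPO}$, so the black-box invocation is legitimate. A minor subtlety is that the infimum is $+\infty$ if no $P$ is absolutely continuous with respect to both $\cD(\theta_{\DRPO})$ and $\cD_{\true}(\theta_{\DRPO})$; in that degenerate case the bound is vacuous but still correct, and it tightens smoothly as soon as a common-dominator pivot exists.
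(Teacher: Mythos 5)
Your proposal is correct and follows essentially the same route as the paper's proof: the same pivot distribution $P$ in the $\rho$-ball around $\cD(\theta_{\DRPO})$, the same three-way telescoping through $\DRPR(\theta_{\DRPO})$ and $\DRPR(\theta_{\PO,\true})$, the same Pinsker bound on the shift cost, and the same infimum over $P$ at the end. Your explicit lifting of $P$ to a distribution map in $\cU(\cD)$ just spells out what the paper invokes implicitly via the generalization principle.
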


Comparing \eqref{eq:excess-risk-DRPO-general} to \eqref{eq:excess-risk-DRPO}, we have an additional term in the excess risk bound which accommodates and accounts for the misspecification of uncertainty set around $\cD(\theta_{\DRPO})$, which doesn't necessarily cover $\cD_{\true}(\theta_{\DRPO})$. Furthermore, it is not difficult to see that if $D(\cD_{\true}(\theta_{\DRPO}) \| \cD(\theta_{\DRPO})) \leq \rho$, then the last infimum term in the upper bound of \eqref{eq:excess-risk-DRPO-general} vanishes and \eqref{eq:excess-risk-DRPO-general} reduces to \eqref{eq:excess-risk-DRPO}. Therefore, Proposition \ref{prop:excess-risk-DRPO-general} provides a generalized excess risk bound for the DRPO than Proposition \ref{prop:excess-risk-DRPO}.

\section{Deferred Proofs}\label{app:C}

For simplicity of notation, we denote the true PO by $\theta_{\PO}^\star$, which is denoted by $\theta_{\PO, \true}$ in the main article.

\subsection{Proof of Proposition \ref{prop:dual-1-var} and Proposition \ref{prop:worst-case-distribution-map}}
\begin{proof}[Proof of Proposition \ref{prop:dual-1-var}]
    We only have to show that for any fixed $\theta\in\Theta$, the dual form \eqref{eq:dual-1-var} holds. This follows Theorem 1 presented in \cite{hu2013kullback}. 
\end{proof}

We refer the readers to Proposition \ref{prop:dual-2-var} for a standard derivation for the dual form of a general $\varphi$-divergence distributionally robust performative risk.

\begin{proof}[Proof of Proposition \ref{prop:worst-case-distribution-map}]
    This follows Proposition 1 presented in \cite{hu2013kullback} and the discussion paragraph right after the proposition.
\end{proof}

\subsection{Proof of Proposition \ref{prop:excess-risk-PO}}
\begin{proof}
    We have the following decomposition of the excess risk bound of the PO,
    \begin{align*}
        &\PR_{\true}(\theta_{\PO}) - \min_{\theta\in\Theta}\PR_{\true}(\theta) \\
        =& \PR_{\true}(\theta_{\PO}) - \PR_{\true}(\theta_{\PO}^\star) \\
        =& \PR_{\true}(\theta_{\PO}) - \PR(\theta_{\PO}) + \underbrace{\PR(\theta_{\PO}) - \PR(\theta_{\PO}^\star)}_{\leq 0~\text{by definition of}~\theta_{\PO}} + \PR(\theta_{\PO}^\star) - \PR_{\true}(\theta_{\PO}^\star) \\
        \leq& 2\sup_{\theta\in\Theta} | \PR(\theta) - \PR_{\true}(\theta)| \\
        =& 2\sup_{\theta\in\Theta} \left|\bbE_{Z\sim\cD(\theta)}[\ell(Z;\theta)] - \bbE_{Z\sim\cD_{\true}(\theta)}[\ell(Z;\theta)]\right| \\
        \leq& 2B \sup_{\theta\in\Theta} D_{\TV}(\cD_{\true}(\theta) \| \cD(\theta)) \\
        \leq& \sqrt{2}B \sup_{\theta\in\Theta}\sqrt{D_{\KL}(\cD_{\true}(\theta) \| \cD(\theta))}.
    \end{align*}
    The last two inequalities are due to the asumption of bounded loss, $|\ell(z;\theta)|\leq B$, and Pinksker's inequality.
\end{proof}

\subsection{Proof of Proposition \ref{prop:excess-risk-DRPO}}
\begin{proof}
    We have the following decomposition of the excess risk bound of the DRPO,
    \begin{align*}
        &\PR_{\true}(\theta_{\DRPO}) - \min_{\theta\in\Theta}\PR_{\true}(\theta) \\
        =& \PR_{\true}(\theta_{\DRPO}) - \PR_{\true}(\theta_{\PO}^\star) \\
        =& \underbrace{\PR_{\true}(\theta_{\DRPO}) - \DRPR(\theta_{\DRPO})}_{\leq 0~\text{by generalization principle}} + \underbrace{\DRPR(\theta_{\DRPO}) - \DRPR(\theta_{\PO}^\star)}_{\leq 0~\text{by definition of}~\theta_{\DRPO}} + \DRPR(\theta_{\PO}^\star) - \PR_{\true}(\theta_{\PO}^\star) \\
        \leq& \DRPR(\theta_{\PO}^\star) - \PR_{\true}(\theta_{\PO}^\star) \\
        =&\sqrt{\Var_{Z\sim\cD(\theta_{\PO}^\star)}[\ell(Z;\theta_{\PO}^\star)] \rho} + o(\sqrt{\rho}).
    \end{align*}
    The last equality is due to the sensitivity property of KL divergence-based DRO \citep{duchi2021statistics}.
\end{proof}

\subsection{Proof of Proposition \ref{prop:excess-risk-DRPO-general}}
\begin{proof}
    For any distribution $P$ such that $D_{\KL}(P\|\cD(\theta_{\DRPO}))\leq \rho$, we have the following decomposition of the excess risk bound of the DRPO,
    \begin{align*}
        &\PR_{\true}(\theta_{\DRPO}) - \min_{\theta\in\Theta}\PR_{\true}(\theta) \\
        =& \PR_{\true}(\theta_{\DRPO}) - \PR_{\true}(\theta_{\PO}^\star) \\
        =& \PR_{\true}(\theta_{\DRPO}) - \bbE_{Z\sim P}[\ell(Z;\theta_{\DRPO})] + \underbrace{\bbE_{Z\sim P}[\ell(Z;\theta_{\DRPO})] - \DRPR(\theta_{\DRPO})}_{\leq 0~\text{by generalization principle}}  \\ & +\underbrace{\DRPR(\theta_{\DRPO}) - \DRPR(\theta_{\PO}^\star)}_{\leq 0~\text{by definition of}~\theta_{\DRPO}} + \DRPR(\theta_{\PO}^\star) - \PR_{\true}(\theta_{\PO}^\star) \\
        \leq& \PR_{\true}(\theta_{\DRPO}) - \bbE_{Z\sim P}[\ell(Z;\theta_{\DRPO})]
        + \DRPR(\theta_{\PO}^\star) - \PR_{\true}(\theta_{\PO}^\star) \\
        =& \PR_{\true}(\theta_{\DRPO}) - \bbE_{Z\sim P}[\ell(Z;\theta_{\DRPO})] + \sqrt{\Var_{Z\sim\cD(\theta_{\PO}^\star)}[\ell(Z;\theta_{\PO}^\star)] \rho} + o(\sqrt{\rho}) \\
        =& \bbE_{Z\sim\cD_{\true}(\theta_{\DRPO})}[\ell(Z;\theta_{\DRPO})] - \bbE_{Z\sim P}[\ell(Z;\theta_{\DRPO})]
        + \sqrt{\Var_{Z\sim\cD(\theta_{\PO}^\star)}[\ell(Z;\theta_{\PO}^\star)] \rho} + o(\sqrt{\rho}) \\ 
        \leq& B D_{\TV}(\cD_{\true}(\theta_{\DRPO}) \| P) + \sqrt{\Var_{Z\sim\cD(\theta_{\PO}^\star)}[\ell(Z;\theta_{\PO}^\star)] \rho} + o(\sqrt{\rho}) \\
        \leq&\sqrt{2} B D_{\KL}(\cD_{\true}(\theta_{\DRPO}) \| P) + \sqrt{\Var_{Z\sim\cD(\theta_{\PO}^\star)}[\ell(Z;\theta_{\PO}^\star)] \rho} + o(\sqrt{\rho}).
    \end{align*}
    By the arbitrariness of $P$ in the divergence ball $D_{\KL}(P\|\cD(\theta_{\DRPO}))\leq \rho$, we complete the proof.
\end{proof}

\section{Performative Risk Minimization}\label{sec:PR-minimization}

We recall a standard algorithm for performative risk minimization when the nominal distribution map $\cD(\cdot)$ is modeled by a \emph{response map} $T_{\theta}:\cZ \to \cZ$ and samples from the \emph{base distribution} $\cD_{\true}(\zeros)$:
\[
Z \sim \cD(\theta) \iff Z \overset{d}{=} T_{\theta}(Z_0)~\text{where}~Z_0\sim\cD_{\true}(\zeros).
\]
This is a popular model for distribution map including strategic classification (see Example \ref{ex:strategic-classification}) and location family (see Example \ref{ex:location-family}) as prominent examples.

It is not hard to see that the nominal distribution map $\cD(\cdot)$ is fully characterized by the response map $T_{\theta}$ and the base measure $\cD_{\true}(\zeros)$ because
\[
\cD(\theta) \overset{d}{=} T_{\theta\sharp}(\cD_{\true}(\zeros))~\text{for any}~\theta\in\Theta,
\]
that is, the measure $\cD(\theta)$ is the \emph{pushforward measure} of $\cD_{\true}(\zeros)$ under the \emph{transport map} $T_{\theta}$. The performative risk can then be reformulated as
\begin{equation*}
    \PR(\theta) = \bbE_{Z\sim \cD(\theta)} [\ell(Z; \theta)] = \bbE_{Z\sim T_{\theta\sharp}(\cD_{\true}(\zeros))} [\ell(Z; \theta)] = \bbE_{Z\sim \cD_{\true}(\zeros)} [\ell(T_{\theta}(Z); \theta)].
\end{equation*}
With the last formula, the performative risk minimization problem becomes a standard (possibly nonconvex) stochastic optimization problem, which can be solved efficiently by any popular stochastic optimization algorithm, \eg, sample average approximation (SAA, \cite{kleywegt2002sample}). 
given the loss function $\ell(z;\theta)$ and the response map $T_{\theta}(z)$ are sufficiently differentiable with respect to $\theta$.

We refer to Section \ref{sec:literature} for a summary of existing performative risk minimization algorithms.

\section{An Explanation to the Claim in Section \ref{sec:tilted-PR-minimization}}
\label{app:explanation}

In Section \ref{sec:tilted-PR-minimization}, we claim that $\argmin_{\theta\in\Theta}\left\{\psi(\theta, \mu)\right\} \approx \argmin_{\theta\in\Theta} \left\{\bbE_{Z\sim\cD(\theta)}[\ell(Z;\theta)]\right\}$ as $\mu \to \infty$. This is due to the fact of that $e^x\sim x+1$ for small $x \to 0$ and the argmax/argmin theorem. Therefore, we have
\begin{align*}
    &~\argmin_{\theta\in\Theta} \left\{\mu \log \bbE_{Z\sim\cD(\theta)}\left[e^{\ell(Z;\theta)/\mu}\right] + \mu\rho\right\} \\ \approx &~\argmin_{\theta\in\Theta} \left\{\mu \log \bbE_{Z\sim\cD(\theta)}\left[\frac{\ell(Z;\theta)}{\mu} + 1\right] + \mu\rho\right\}\\
    \approx &~\argmin_{\theta\in\Theta} \left\{\bbE_{Z\sim\cD(\theta)}[\ell(Z;\theta)]\right\}
\end{align*}
as $\mu\to\infty$.

\section{Experimental Details and More Results}\label{app:D}

For the loss function $\ell(x,y;\theta)$, we adopt the cross-entropy loss with $L_2$-regularization, that is,
\begin{equation*}
    \ell(x,y;\theta) = -y\log h_\theta (x) - (1-y)\log(1-h_\theta(x)) + \frac{\lambda}{2} \|\theta\|_2^2,
\end{equation*}
where $h_\theta(x) = \left(1+\exp\{-\theta^\top x\}\right)^{-1}$ and $\lambda = 0.001$. Given a model $\theta$, we predict $\widehat{y} = \ones\{h_\theta(x) \geq 0.5\}$. In addition, each replicate of the experiments is run on a local machine with an Intel Xeon Gold 6154 CPU and 32GB of RAM in less than an hour execution time.

\subsection{Strategic Classification with Misspecified Cost Function in Section \ref{sec:exp1} (Continued)}

The data preprocessing procedure for the credit dataset \cite{creditdata} follows the procedure documented in \cite{perdomo2020performative}. After that procedure, the base distribution $\cD_{\true}(\zeros)$ has $n = 14878$ data points with equal probability mass.
We treat the distribution map associated with $\cD_{\true}(\zeros)$ as the underlying test distribution map which is unknown to us.
We generate $n$ IID samples from $\cD_{\true}(\zeros)$ to get a training base distribution $\widehat{\cD}(\zeros)$, that is, $\widehat{\cD}(\zeros) \sim \cD_{\true}(\zeros)^{\otimes n}$, and then we have a training nominal distribution map induced by $\widehat{\cD}(\zeros)$.
This training procedure is repeated for $10$ trials, and the shaded region in each of the figures in this paper represents the standard error of the mean calculated from the $10$ trials for the corresponding curve.

The left of Figure \ref{fig:exp1-app} shows performative balanced error rate, which refers to the balanced error rate (BER) on the test model-induced distribution, incurred by the PO and the DRPO's with various radius $\rho$'s.
Because the cross-entropy loss serves as a surrogate for classification error, we see patterns of these curves similar to those in the left of Figure \ref{fig:exp1}: as $\rho$ increases, the DRPO achieves more uniform performance across a wider range of $\eps_{\true}$. On the other hand, because the performative classification error is not exactly the criterion we minimize, different patterns are also observed: the DRPO with relatively large radius $\rho = 0.04$ outperforms the PO for all $\eps_{\true} \in [0,1]$. As indicated by the right of Figure \ref{fig:exp1-app}, by increasing $\rho$ the DRPO constantly improves the relative worst-case performance in terms of performative BER for $\eta\in\{0.6, 0.8, 1.0\}$. Here the relative worst-case improvement in performative BER of $\theta$ to $\theta_{\PO}$ is defined by $\frac{\max_{\epsilon_{\true}} \BER_{\true}(\theta_{\PO})-\max_{\epsilon_{\true}} \BER_{\true}(\theta)}{\max_{\epsilon_{\true}} \BER_{\true}(\theta_{\PO})} \times 100\%$.

\begin{figure*}[h]
     \centering
     \begin{subfigure}[b]{0.45\textwidth}
         \centering
         \includegraphics[width=\textwidth]{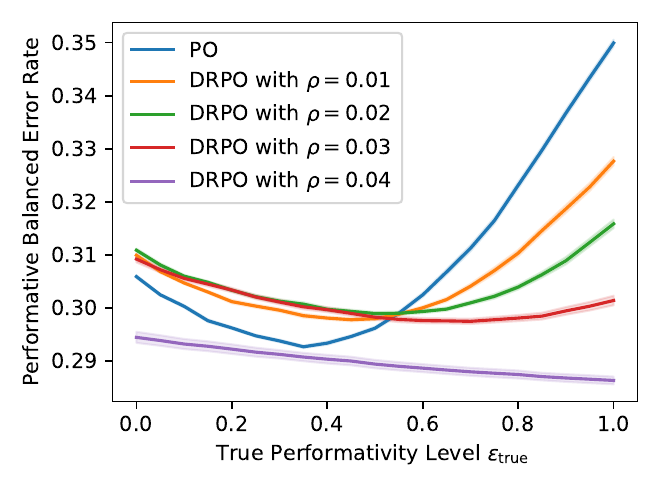}
     \end{subfigure}
     \hfill
     \begin{subfigure}[b]{0.45\textwidth}
         \centering
         \includegraphics[width=\textwidth]{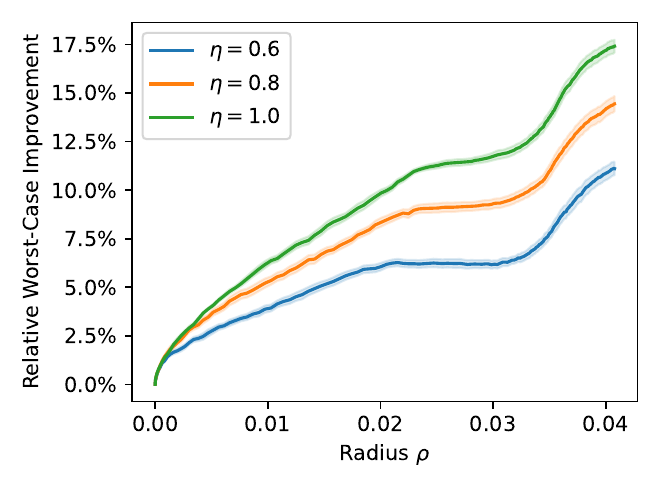}
     \end{subfigure}
        \caption{Additional results of Experiment \ref{sec:exp1}. Left: performative balanced error rate incurred by the PO and the DRPO's with various radius $\rho$'s. Right: relative improvement in worst-case performative balanced error rate of the DRPO to the PO as the radius $\rho$ increases, for different range of misspecification $\eta$'s.}
        \label{fig:exp1-app}
\end{figure*}

For implementing the post-fitting calibration procedure in Section \ref{sec:calibration}, we have to estimate the KL divergence between $\cD_{\true}(\theta_{\DRPO}(\rho))$ and $\cD(\theta_{\DRPO}(\rho))$ for any fixed $\rho$.
Here we index $\theta_{\DRPO}$ by the radius $\rho$ to emphasize its dependence on $\rho$. Recall that
\[
Z \sim \cD(\theta) \iff Z \overset{d}{=} Z_0 - 0.5A \theta~\text{and}~Z \sim \cD_{\true}(\theta) \iff Z \overset{d}{=} Z_0 - \eps_{\true}A \theta ~\text{where}~Z_0\sim\cD_{\true}(\zeros).
\]
Here $A^\top = \left[B^\top\mid \zeros_d\right]$ (see Section \ref{sec:exp2} for the definition of $B$) and we have training base distribution $\hD(\zeros)$ from $\cD_{\true}(\zeros)$.
We use an inexact parametric method to estimate the KL divergence. Assume $\cD_{\true}(\zeros) \sim \cN(\mu, \Sigma)$, then
\begin{align*}
D(\cD_{\true} \| \cD) &= \cD(\cN(\mu - 0.5A\theta_{\DRPO}, \Sigma)\|\cN(\mu-\eps_{\true}A\theta_{\DRPO}, \Sigma)) \\
&= 0.5(\eps_{\true}-0.5)^2 \theta_{\DRPO}^\top A^\top \Sigma^{-1} A \theta_{\DRPO}.
\end{align*}
Let $\widehat{\Sigma}$ be the sample covariance of $\hD(\zeros)$, a plug-in method implies an estimate of the KL divergence, which is given by
\[
\widehat{D}(\cD_{\true}(\theta_{\DRPO}(\rho)) \| \cD(\theta_{\DRPO}(\rho))) = 0.5(\eps_{\true}-0.5)^2 \theta_{\DRPO}^\top A^\top \widehat{\Sigma}^{-1} A \theta_{\DRPO}.
\]

\subsection{Partially Identifiable Distribution Map in Section \ref{sec:exp2} (Continued)}

The data generating procedure is the same as that in Experiment \ref{sec:exp1}. The left of Figure \ref{fig:exp2-app} shows performative balanced error rate (BER) incurred by the PO and the TPO's with various tilt $\alpha$'s. Similar to the left of Figure \ref{fig:exp1-app}, we observe that 1) as $\alpha$ increases, the TPO achieves more uniform performance across a wider range of $\eps_{\true}$; 2) the TPO with relatively large tilt $\alpha = 0.5$ outperforms the PO for all $\eps_{\true} \in [-1,1]$.
The right of Figure \ref{fig:exp2-app} shows relative improvement in worst-case performative balanced error rate of the TPO to the PO as the tilt $\alpha$ increases, for different range of misspecification $\eta$'s.
Without misspeicification, $\eta = 0$, the TPO has comparable performance to the PO in terms of performative BER. With moderate to large misspecification, $\eta \in\{2/3, 4/3, 2\}$, the TPO shows significant advantages over the PO in terms of relative improvement in worst-case performative BER.

\begin{figure*}[h]
     \centering
     \begin{subfigure}[b]{0.45\textwidth}
         \centering
         \includegraphics[width=\textwidth]{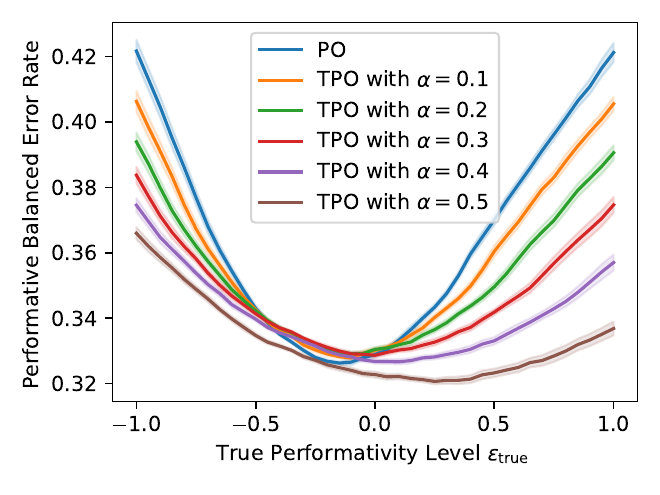}
     \end{subfigure}
     \hfill
     \begin{subfigure}[b]{0.45\textwidth}
         \centering
         \includegraphics[width=\textwidth]{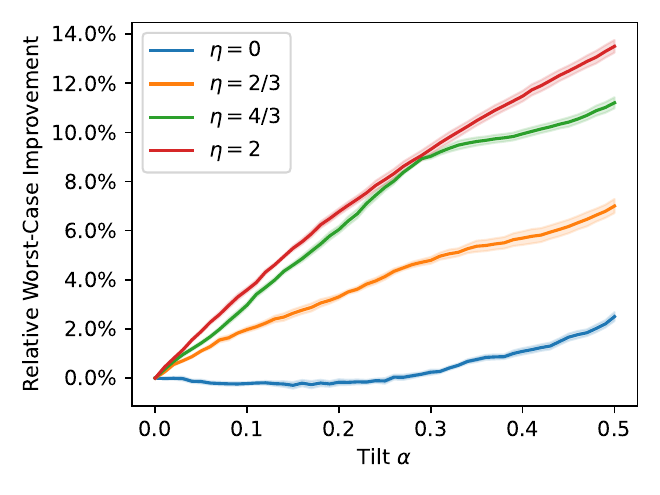}
     \end{subfigure}
        \caption{Additional results of Experiment \ref{sec:exp2}. Left: performative balanced error rate incurred by the PO and the TPO's with various tilt $\alpha$'s. Right: relative improvement in worst-case performative balanced error rate of the TPO to the PO as the tilt $\alpha$ increases, for different range of misspecification $\eta$'s.}
        \label{fig:exp2-app}
\end{figure*}

\subsection{Fairness without Demographics in Section \ref{sec:exp3} (Continued)}

We adopt the following data generating process similar to that in \cite{peet2022long}. Let $X \sim \gamma \cN(\mu_A, \Sigma_A) + (1-\gamma) \cN(\mu_B, \Sigma_B)$. Let $\gamma = 0.8$ so that group $A$ is the majority group and group $B$ is the minority group. Let $\mu_A = 1 \times \ones_d$, $\mu_B = 0.8\times \ones_d$, and $\Sigma_A = \Sigma_B = 0.1\times I_d$. If $X$ comes from group $A$, then label $Y = 0$ if $X^\top \ones_d \leq \mu_A^\top \ones_d$. If $X$ comes from group $B$, then label $Y = 0$ if $X^\top \ones_d \leq \mu_B^\top \ones_d$. The distribution map follows:
\begin{align*}
    X_{1:\lfloor d/2\rfloor} &\leftarrow X_{1:\lfloor d/2\rfloor} - \epsilon \theta_{1:\lfloor d/2\rfloor} \\
    X_{(\lfloor d/2\rfloor + 1): d} &\leftarrow X_{(\lfloor d/2\rfloor + 1): d}
\end{align*}
so that the first $\lfloor d/2\rfloor$ features are strategic features, and $\epsilon$ controls the strength of performativity. We choose $d = 10$ and $\eps = 0.5$ to wrap up the setup. Finally, we assume knowledge of the true performativity and observe IID samples of size $n = 12500$ from the population base distribution. In short, we eliminate the effect of population distribution map misspecification, and instead concentrate on the effect of subpopulation distribution map shift.

Figure \ref{fig:exp3-accuracy} shows the performative accuracy, which refers to the accuracy on the model-induced distribution, of the population, the majority, and the minority incurred by the TPO, as the tilt $\alpha$ increases.
Because the cross-entropy loss serves as a surrogate for classification error, we see patterns of the three curves similar to those in Figure \ref{fig:exp3-risk}.
The PO (which is TPO with $\alpha = 0$) exhibits the highest performative accuracy at the population, but the greatest disparity between its performance for the majority and minority groups.
As $\alpha$ increases, the TPO reduces the performance gap between the two groups at the expense of an decreased population performative accuracy.
This suggests that the distributionally robust performative prediction framework has the potential to mitigate unfairness towards the minority group, even in the absence of demographic information.

\begin{figure}[h]
    \centering
    \includegraphics[width=0.45\textwidth]{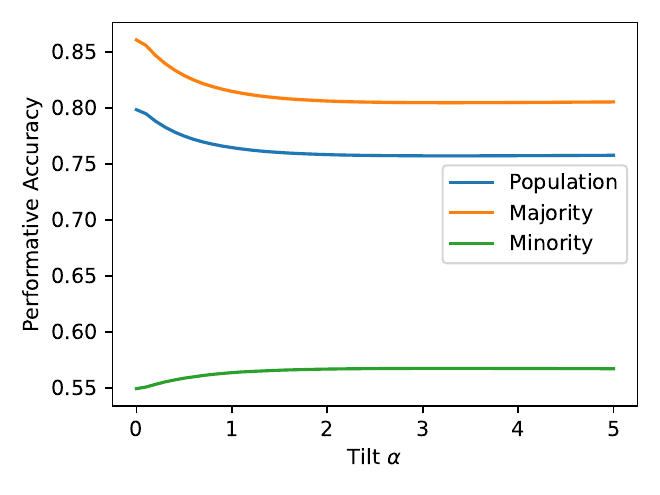}
    \caption{Additional results of Experiment \ref{sec:exp3}. Performative accuracy of the population, the majority, and the minority, as the tilt $\alpha$ increases.}
    \label{fig:exp3-accuracy}
\end{figure}

For implementing the calibration set procedure in Section \ref{sec:calibration}, we have to obtain access to a few samples with the group membership information. Here we allow ourselves to observe IID samples of size $n_{\cal} = 125$ with the known group membership from the population base distribution.

\section{Extension to General $\varphi$-Divergence}\label{app:F}

The KL divergence distributionally robust performative prediction framework can be extended to a general $\varphi$-divergence distributionally robust performative prediction framework. Recall that a $\varphi$-divergence is defined by
\[
D_{\varphi}(Q \| P) = \int \varphi\left(\frac{dQ}{dP}\right)dP,
\]
where $\varphi: \bbR_{+} \to\bbR_{+}$ and $\varphi(1) = 0$.
Here $dQ/dP$ is the Radon–Nikodym derivative, and we implicitly require the probability measure $Q$ to be absolutely continuous with respect to $P$. Note that if we choose $\varphi(t) = t\log{t}$, then we recover the framework presented in the main article. Now we keep $\varphi$ as a generic function and we will instantiate some popular families of $\varphi$-divergence after the presentation of the general framework.

With the $\varphi$-divergence, we can define a family of distribution maps around the nominal distribution map.
Specifically, the \emph{uncertainty collection} around $\cD$ with \emph{critical radius} $\rho$ is defined as
\begin{equation*}
    \cU(\cD) = \{\tD:\Theta\to\cP(\cZ)\mid D_{\varphi}(\tD(\theta)\|\cD(\theta)) \leq \rho, \forall \theta \in \Theta\}.
\end{equation*}
\begin{definition}[$\varphi$-divergence distributionally robust performative risk]
    The $\varphi$-divergence distributionally robust performative risk with the uncertainty collection $\cU(\cD)$ is defined as
    \begin{equation}\label{eq:DRPR-phi}
        \DRPR_{\varphi}(\theta) = \sup_{\tD: \tD \in \cU(\cD)} \bbE_{Z\sim \tD(\theta)}[\ell(Z;\theta)].
    \end{equation}
\end{definition}
The evaluation of $\varphi$-divergence distributionally robust performative risk $\DRPR_{\varphi}(\theta)$ given in \eqref{eq:DRPR-phi} involves an infinite dimensional maximization problem which is generally intractable. Fortunately, it is in fact equivalent to a minimization problem over two dual variables. This is given by the following strong duality result.
\begin{proposition}[Strong duality of $\DRPR_{\varphi}$]\label{prop:dual-2-var}
    For any $\theta \in \Theta$, we have
    \begin{equation}\label{eq:dual-2-var}
    \DRPR_{\varphi}(\theta) = \inf_{\mu\geq 0, \nu\in\bbR} \left\{\bbE_{Z\sim\cD(\theta)}\left[\mu\varphi^*\left(\frac{\ell(Z;\theta) - \nu}{\mu}\right) + \mu\rho + \nu\right]\right\},
\end{equation}
where $\varphi^*(s) = \sup_t\{st-\varphi(t)\}$ is the convex conjugate of $\varphi$. 
\end{proposition}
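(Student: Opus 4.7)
The plan is to fix $\theta \in \Theta$ and recognize that the sup defining $\DRPR_{\varphi}(\theta)$ is, once $\theta$ is frozen, a classical $\varphi$-divergence DRO problem. Concretely, let $P = \cD(\theta)$ and reparametrize the decision variable by the Radon--Nikodym derivative $L = dQ/dP \geq 0$ (the constraint $\tD(\theta) \in \cU(\cD)$ forces $Q \ll P$, so this loses nothing). The primal becomes
\begin{equation*}
\sup_{L \geq 0} \left\{ \int \ell(z;\theta)\, L(z)\, dP(z) \;:\; \int \varphi(L)\, dP \leq \rho,\ \int L\, dP = 1 \right\},
\end{equation*}
which is a convex program in $L$ (linear objective, convex divergence constraint, linear normalization constraint).

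Next I would introduce Lagrange multipliers $\mu \geq 0$ for the divergence inequality and $\nu \in \bbR$ for the normalization equality. The Lagrangian is
\begin{equation*}
\cL(L;\mu,\nu) = \int \bigl[(\ell(z;\theta) - \nu) L(z) - \mu \varphi(L(z))\bigr]\, dP(z) + \mu \rho + \nu.
\end{equation*}
For any fixed $\mu > 0$ and $\nu$, maximizing pointwise in $L(z) \geq 0$ yields exactly the convex conjugate:
\begin{equation*}
\sup_{L(z) \geq 0} \bigl[(\ell(z;\theta)-\nu)L(z) - \mu \varphi(L(z))\bigr] = \mu \, \varphi^*\!\left(\frac{\ell(z;\theta) - \nu}{\mu}\right),
\end{equation*}
where we extend $\varphi(t) = +\infty$ for $t < 0$ so that the conjugate is taken in the standard sense. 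Substituting back and handling the boundary case $\mu = 0$ separately gives the claimed dual objective, and weak duality $\DRPR_{\varphi}(\theta) \leq$ (dual) follows by the usual max-min inequality.

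For strong duality, I would invoke a Slater-type interior-point argument: the choice $L \equiv 1$ (equivalently $Q = P$) satisfies $\int \varphi(L)\,dP = \varphi(1) = 0 < \rho$ and $\int L\,dP = 1$, so the primal has a strictly feasible point relative to the divergence inequality. Combined with convexity of the primal and the standard machinery for $\varphi$-divergence constrained programs (e.g., the treatment in Ben-Tal et al.\ or Hu and Hong, which Proposition \ref{prop:dual-1-var} already uses for the KL case), this upgrades weak duality to equality. The result then holds for every $\theta \in \Theta$ since the argument was pointwise in $\theta$.

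The main obstacle I anticipate is the measure-theoretic justification of the interchange of supremum and integration in the pointwise maximization step, together with the boundary behavior at $\mu = 0$ (where the dual functional must be interpreted as a limit, collapsing to the essential supremum of $\ell - \nu$ plus $\nu$). Both are standard in the $\varphi$-divergence DRO literature, so the cleanest route is to reduce to an existing strong-duality theorem for divergence-constrained stochastic programs rather than re-derive the interchange from scratch; this mirrors the proof strategy already used for Proposition \ref{prop:dual-1-var}.
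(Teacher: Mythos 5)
Your proposal follows essentially the same route as the paper's proof: fix $\theta$, reparametrize by the likelihood ratio $L = d\tD(\theta)/d\cD(\theta)$, form the Lagrangian with multipliers $\mu \geq 0$ and $\nu \in \bbR$ for the divergence and normalization constraints, and recover $\varphi^*$ by pointwise maximization over $L(z) \geq 0$. If anything, you are slightly more explicit than the paper about why strong duality holds (the Slater point $L \equiv 1$ with $\varphi(1) = 0 < \rho$) and about the boundary case $\mu = 0$, which the paper passes over with the phrase ``for regular $\varphi$-divergence.''
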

\begin{proof}
    We only have to show that for any fixed $\theta \in \Theta$, the dual form \eqref{eq:dual-2-var} holds.
    We introduce the likelihood ratio $L(Z) = d\tD(\theta)/d\cD(\theta)$. By change of variable, we can rewrite the $\varphi$-divergence distributionally robust performative risk \eqref{eq:DRPR-phi} as
\begin{align*}
    &~\sup_{\tD(\theta): D_{\varphi}(\tD(\theta)\|\cD(\theta)) \leq \rho} \bbE_{Z\sim\tD(\theta)}[\ell(Z;\theta)] \\=&~\sup_{L\geq 0}\left\{\bbE_{Z\sim\cD(\theta)}[L(Z) \ell(Z;\theta)] \mid \bbE_{Z\sim\cD(\theta)}[\varphi(L(Z))] \leq \rho, \bbE_{Z\sim\cD(\theta)}[L(Z)] = 1\right\},
\end{align*}
where the supremum takes over measurable functions. This gives us a constrained optimization problem. Let $\mu \geq 0$ be the Lagrange multiplier for $\bbE_{Z\sim\cD(\theta)}[\varphi(L(Z))] \leq \rho$ and $\nu\in\bbR$ be the Lagrange multiplier for $\bbE_{Z\sim\cD(\theta)}[L(Z)] = 1$. The corresponding Lagrangian is 
\begin{equation*}
    \cL(L, \mu, \nu) = \bbE_{Z\sim\cD(\theta)}[(\ell(Z;\theta) - \nu) L(Z) - \mu \varphi(L(Z))] + \mu\rho + \nu.
\end{equation*}
For regular $\varphi$-divergence, we have
\begin{align*}
    &\sup_{\tD(\theta):D_\varphi(\tD(\theta)\| \cD(\theta))\leq\rho}\bbE_{Z\sim\tD(\theta)}[\ell(Z;\theta)]\\
    =& \inf_{\mu\geq 0, \nu\in\bbR} \sup_{L\geq 0} \cL(L,\mu,\nu) \\
    =& \inf_{\mu\geq 0, \nu\in\bbR} \sup_{L\geq 0} \left\{\bbE_{Z\sim\cD(\theta)}[(\ell(Z;\theta) - \nu) L(Z) - \mu \varphi(L(Z))] + \mu\rho + \nu\right\} \\
    =& \inf_{\mu\geq 0, \nu\in\bbR} \sup_{L\geq 0} \left[\bbE_{Z\sim\cD(\theta)}\left[\mu\left\{\frac{\ell(Z;\theta) - \nu}{\mu} L(Z) - \varphi(L(Z))\right\} + \mu\rho + \nu\right]\right\} \\
    =& \inf_{\mu\geq 0, \nu\in\bbR} \left\{\bbE_{Z\sim\cD(\theta)} \left[\mu \sup_{t\geq 0}\left\{\frac{\ell(Z;\theta) - \nu}{\mu} t - \varphi(t)\right\} + \mu\rho + \nu\right]\right\} \\
    =& \inf_{\mu\geq 0, \nu\in\bbR} \left\{\bbE_{Z\sim\cD(\theta)}\left[\mu\varphi^*\left(\frac{\ell(Z;\theta) - \nu}{\mu}\right) + \mu\rho + \nu\right]\right\}.
\end{align*}
Here the last equality holds according to the definition of the convex conjugate $\varphi^*(\cdot)$.
\end{proof}

To develop an algorithm for finding the DRPO, we introduce the \emph{$(\mu,\nu)$-augmented performative risk}.
\begin{definition}[$(\mu,\nu)$-augmented performative risk]
    The $(\mu,\nu)$-augmented performative risk is defined by
    \begin{equation}\label{eq:AugPR}
        \AugPR_{\varphi}(\theta,\mu,\nu) = \bbE_{Z\sim\cD(\theta)}\left[\mu\varphi^*\left(\frac{\ell(Z;\theta) - \nu}{\mu}\right) + \mu\rho + \nu\right].
    \end{equation}
\end{definition}

From \eqref{eq:dual-2-var} and \eqref{eq:AugPR}, we see that the minimization problem of $\DRPR_{\varphi}(\theta)$ over $\theta$ is equivalent to the minimization problem of $\AugPR_{\varphi}(\theta,\mu,\nu)$ jointly over $(\theta, \mu, \nu)$, which is itself a performative risk minimization problem. We summarize this procedure in Algorithm \ref{alg:3}.

\begin{algorithm}[h]
	\caption{Augmented Performative Risk Minimization}\label{alg:3}
	\label{algo:dropl_cdr}
	\begin{algorithmic}[1]
		\State \textbf{Input:} nominal distribution map $\cD(\theta)$
		\State Update $(\theta, \mu, \nu) \leftarrow \argmin_{\theta\in\Theta,\mu\geq 0,\nu\in\bbR}\left\{\bbE_{Z\sim\cD(\theta)}\left[\mu\varphi^*\left(\frac{\ell(Z;\theta) - \nu}{\mu}\right) + \mu\rho + \nu\right]\right\}$
	    \State \textbf{Return: } $\theta$
	\end{algorithmic}
\end{algorithm}

For special choice of $\varphi$, the strong duality result \eqref{prop:dual-2-var} can be reduced to involving only a single dual variable. Now we instantiate $\varphi$-divergence as the Cressie-Read family:
\[
\phi_k(t) =  \frac{t^k-k t+k-1}{k(k-1)},
\]
for $k > 1$. Like KL divergence, a distributionally robust performative prediction problem induced by a divergence from the Cressie-Read family has a dual reformualtion with single dual variable:
\begin{equation}\label{eq:dual-1-var-read}
    \DRPR_{\phi_{k}}(\theta) = \inf_{\mu\geq 0}\left\{(1+\rho k(k-1))^{1/k} \bbE_{Z\sim\cD(\theta)}\left[\max\{\ell(Z;\theta) - \mu, 0\}^{k_*}\right]^{1/k_*} + \mu\right\},
\end{equation}
where $k_*$ is the conjugate number of $k$ such that $1/k + 1/k_* = 1$. Therefore, an algorithm parallel to Algirthm \ref{alg:1} can be developed in a similar fashion based on the single-variable dual form \eqref{eq:dual-1-var-read}.

As a final remark, all of the theoretical results regarding excess risk bounds (see Proposition \ref{prop:excess-risk-DRPO} and Proposition \ref{prop:excess-risk-DRPO-general}) are still valid for the general $\varphi$-divergence distributionally robust performative prediction (which means the result statements won't change if one replaced the KL-divergence by any $\varphi$-divergence).

On the other hand, it is possible to extend Proposition \ref{prop:excess-risk-PO} to general $\varphi$-divergence, and the result statement needs a slight modification. By generalized Pinsker's inequality, there is an increasing function $F: [0, 2] \to \mathbb{R}_+$ such that $D_{\varphi}(P \| Q) \geq F(\operatorname{TV}(P \| Q))$. Then for general $\varphi$-divergence, Proposition \ref{prop:excess-risk-PO} can be modified to $\mathcal{E}(\theta_{\operatorname{PO}}) \leq 2B \sup_{\theta\in\Theta}F^{-1}(D_{\varphi}(\mathcal{D}_{\operatorname{true}}(\theta) \| \mathcal{D}(\theta))$ or $\mathcal{E}(\theta_{\operatorname{PO}}) \leq 2B F^{-1}(\sup_{\theta\in\Theta}D_{\varphi}(\mathcal{D}_{\operatorname{true}}(\theta) \| \mathcal{D}(\theta))$ due to monotonicity of $F(\cdot)$. For a concrete example, $F(v) = v^2 \boldsymbol{1} \{v<1\} + \frac{v}{2-v}\boldsymbol{1}\{v\geq1\}$ when $\varphi(t) = (t-1)^2$.

\section{Additional Discussion}\label{app:G}

In this appendix, we provide additional materials for discussion.

\textbf{Extension to Wasserstein distance.} It is possible to use Wasserstein distance to define the uncertainty collection within our framework. Our algorithms can be modified to compute Wasserstein DRPO. For example, one can establish the strong duality of Wasserstein DRPO and develop an alternating minimization algorithm similar to Algorithm \ref{alg:1}. However, the new algorithm involves an additional step of transport cost-regularized loss maximization due to the more complex dual form of Wasserstein DRO. For the corresponding theory, we expect that the square root of variance in \eqref{prop:excess-risk-DRPO} would be replaced by the Lipschitz norm of $\ell(\cdot, \theta_{\operatorname{PO},\operatorname{true}})$.

\textbf{Conservativeness of $\rho_{\operatorname{cal}}$ and trade-off in selecting $\rho$.} The main text covers the discussion of the conservativeness of $\rho_{\operatorname{cal}}$ in two ways. Firstly, we show the trade-off in selecting $\rho$. For values of $\rho$ ranging from small to moderate, DRPO outperforms PO in terms of performative risk (and similarly for worst-case performative risk). Conversely, for large values of $\rho$, PO is better than DRPO. There exists an "sweet spot" of $\rho$ where DRPO yields maximal benefits over PO. This trade-off between DRPO and PO is demonstrated in any "vertical slices" of the left plot of Figure \ref{fig:exp1} (and similarly in the lines of the middle plot of Figure \ref{fig:exp1} for worst-case performance). Secondly, we show the performance of the calibrated radius $\rho_{\operatorname{cal}}$ in the middle plot of Figure \ref{fig:exp1}, where the vertical bands indicate the calibrated radius $\rho_{\operatorname{cal}}$. Although $\rho_{\operatorname{cal}}$ doesn't achieve the best possible worst-case improvement (which is an impossible oracle), it achieves a comparable performance, especially when $\eta \in \{0.8, 1.0\}$. This demonstrates the effectiveness of $\rho_{\operatorname{cal}}$ chosen by our calibration method.

\textbf{Difference between TPO and TERM, and difference between DRPO and simple DRO.} TPO (see Section \ref{sec:tilted-PR-minimization} and Algorithm \ref{alg:2}) differs from TERM \citep{li2020tilted} in that it takes into account performativity, whereas TERM ignores. To be precise, TPO minimizes $\mathbb{E}_{Z\sim\mathcal{D}(\theta)}[e^{\alpha\ell(Z;\theta)}]$ while TERM minimizes $\mathbb{E}_{Z\sim\mathcal{D}(\boldsymbol{0})}[e^{\alpha\ell(Z;\theta)}]$, where $\mathcal{D}(\theta)$ is the distribution map and $\mathcal{D}(\boldsymbol{0})$ is the base distribution. An analogous explanation applies to the difference between DRPO and simple DRO. TERM (or implicitly equivalently simple DRO) is ineffective in the context of this work's problem setup because it fails to account for performativity. Note that our methods are not doing DRO because the uncertainty set here depends on the model parameter $\theta$. We borrow the idea of distributional robustness, but we have a fundamentally different problem at hand.

\textbf{Absence of small calibration set.} The calibration set is not always available. Here we clarify the calibration set method and briefly discuss a possible solution when there is no calibration set. In the experiment of Section \ref{sec:exp3}, we only need a small set of calibration data, which aligns with the regime of "weak group information". It is generally difficult to calibrate radius for uncertainty set and most distributional robustness related work only concerns the effect of increasing $\alpha$ and $\rho$. We do more to demonstrate that there are some practical calibration methods that work. In the absence of any calibration data, one can specify the radius by some prior belief. For example, consider $\mathcal{D}_{\operatorname{pop}}(\theta) = \gamma \mathcal{D}_{\operatorname{major}}(\theta) + (1-\gamma)\mathcal{D}_{\operatorname{minor}}(\theta)$ as in Example \ref{ex:fairness}. If one believes that $\gamma_1 \leq \gamma \leq \gamma_2$ for some $0< \gamma_1, \gamma_2 < 1$, then one can upper bound the divergence $\mathcal{D}(\mathcal{D}_{\operatorname{major}}(\theta) \| \mathcal{D}_{\operatorname{pop}}(\theta)) \leq -\log(\gamma_1)$ and $\mathcal{D}(\mathcal{D}_{\operatorname{minor}}(\theta) \| \mathcal{D}_{\operatorname{pop}}(\theta)) \leq -\log(1-\gamma_2)$. Further, one can choose $\rho = \max\{-\log(\gamma_1), -\log(1-\gamma_2)\}$ when using DRPO. However, this choice of radius could be conservative.

\textbf{Algorithmic convergence.} The convergence guarantees of the proposed algorithms can be established on a case-by-case basis, depending on the specific ``inner'' algorithm or solver employed for solving performative risk minimization.
Existing algorithmic convergence results for model-based PO solvers \citep{miller2021outside,izzo2021learn} can be taped into our algorithms naturally.
Moreover, empirical results in Section \ref{sec:experiment} validate the effectiveness of our proposed algorithms.

\textbf{Broader impacts.} This paper presents work whose goal is to advance the field of machine learning, especially the subfield of performative prediction.
Despite the development of new algorithms, their direct societal impact are limited to those outlined in the original paper on performative prediction \citep{perdomo2020performative}.

\end{document}